\newtheorem{example}{Example}
\newcommand{\cw}{\mathcal{W}}
\newcommand{\be}{\mathbb{E}}
\newcommand{\bp}{\mathbb{P}}
\newcommand{\norm}[1]{\lVert #1 \rVert}
\theoremstyle{plain}
\newtheorem{theorem}{Theorem}[section]
\newtheorem{lemma}[theorem]{Lemma}
\theoremstyle{definition}
\newtheorem{definition}[theorem]{Definition}
\theoremstyle{remark}
\def\eqref#1{equation~\ref{#1}}
\def\1{\bm{1}}
\def\vtheta{{\bm{\theta}}}
\DeclareMathAlphabet{\mathsfit}{\encodingdefault}{\sfdefault}{m}{sl}
\SetMathAlphabet{\mathsfit}{bold}{\encodingdefault}{\sfdefault}{bx}{n}
\def\sB{{\mathcal{B}}}
\def\sD{{\mathcal{D}}}
\def\sE{{\mathcal{E}}}
\def\sL{{\mathcal{L}}}
\def\sP{{\mathcal{P}}}
\def\sT{{\mathcal{T}}}
\def\sX{{\mathcal{X}}}
\def\sY{{\mathcal{Y}}}
\newcommand{\E}{\mathbb{E}}
\DeclareMathOperator*{\argmin}{arg\,min}
\def\valpha{\bm{\alpha}}
\title{AutoMixAlign: Adaptive Data Mixing for Multi-Task Preference Optimization in LLMs}
\author{
 \textbf{Nicholas E. Corrado\textsuperscript{1,2}},
 \textbf{Julian Katz-Samuels\textsuperscript{2}},
 \textbf{Adithya Devraj\textsuperscript{2}},
 \textbf{Hyokun Yun\textsuperscript{2}},
\\
 \textbf{Chao Zhang\textsuperscript{2,3}},
 \textbf{Yi Xu\textsuperscript{2}},
 \textbf{Yi Pan\textsuperscript{2}},
 \textbf{Bing Yin\textsuperscript{2}},
 \textbf{Trishul Chilimbi\textsuperscript{2}}
\\
\\
 \textsuperscript{1}University of Wisconsin--Madison,
 \textsuperscript{2}Amazon,
 \textsuperscript{3}Georgia Institute of Technology
\\
 \small{
   \textbf{Correspondence:} \href{mailto:email@domain}{ncorrado@wisc.edu, jkatzsamuels@gmail.com}
 }
}
\begin{document}
\doparttoc 
\faketableofcontents 

\maketitle
\begin{abstract}
When aligning large language models (LLMs), their performance on various tasks (such as being helpful, harmless, and honest) depends heavily on the composition of their training data.
However, selecting a data mixture that achieves strong performance across all tasks is challenging. 
Existing approaches rely on large ablation studies, heuristics, or human intuition, but these can be prohibitively expensive and suboptimal.
We study this problem in the setting of preference optimization via DPO and introduce AutoMixAlign (AMA), a theoretically-grounded algorithm that adaptively mixes datasets during training to balance performance across tasks.
AMA first trains \textit{specialist models} for each task to determine losses that correspond to strong task performance.
Then, it trains a generalist model using a novel minimax optimization that prioritizes tasks for which generalist model losses deviate most from specialist model losses.
To optimize this problem, we propose two algorithms: (1) AMA-R, which adaptively reweights the objective to prioritize tasks, and (2) AMA-S, which adaptively adjusts how much data is sampled from each task to prioritize tasks.
Both algorithms achieve a convergence rate of $O(1/\sqrt{T})$ in the convex case. AMA-R's convergence result follows from~\citet{sagawa2019distributionally}, and we provide a convergence proof for AMA-S using online learning techniques such as EXP3~\cite{auer2002finite}.
We evaluate AMA on several multitask alignment setups and find that AMA outperforms the standard alignment approach---which simply optimizes the total loss across all tasks---and also outperforms model merging methods.
\end{abstract}


\section{Introduction}

\begin{figure*}
    \centering
    \includegraphics[width=\linewidth]{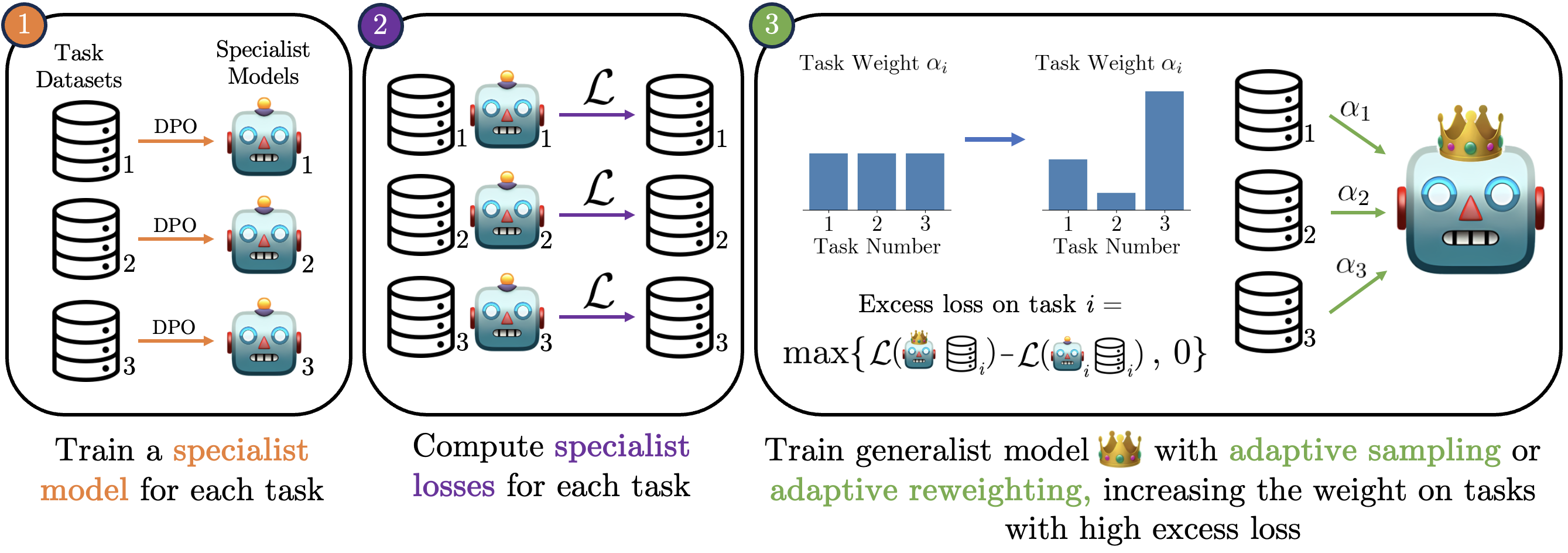}
    \caption{\textbf{Overview of AMA}. First, we train  \textit{specialist models} on each task dataset individually to obtain a model that performs well on that specific task. 
    Next, we pre-compute the losses achieved by each specialist model and add these losses to their respective datasets.
    Lastly, we use a minimax optimization algorithm to optimize a generalist model towards the losses achieved by the specialist models.}
    \label{fig:overview}
\end{figure*}

As LLMs are increasingly deployed in real-world applications, it has become essential to develop generalist models with strong capabilities across a broad range of tasks (\textit{e.g.} math, coding, safety). 
A key step toward this objective is preference optimization, learning from human or synthetic preference data with notable methods like RLHF~\cite{ouyang2022training} or DPO~\cite{rafailov2023direct}.
To produce generalist models, LLMs are typically trained on a diverse mix of datasets, each targeting different tasks. 
However, the training data composition can significantly affect performance in each task~\citep{brown2020language, du2022glam, chowdhery2023palm, ivison2024UnpackingDA}. This observation motivates the central question in this work: \textit{How should we mix each task dataset to produce a model that performs well across a variety of tasks?}

Currently, practitioners tackle this data mixing problem using heuristics or large-scale ablation studies on dataset mixtures \cite{touvron2023llama, dubey2024llama, ivison2023camels, lambert2024t}. 
Both approaches have significant drawbacks: heuristics tend to produce suboptimal models while ablation studies become prohibitively expensive due to the combinatorial nature of dataset mixing. This problem is particularly acute in production settings, where regular model releases must accommodate evolving datasets and product requirements, creating significant ongoing expenses in computation and human effort.

In this paper, we introduce a novel and theoretically justified approach to the data mixing problem in DPO-based preference optimization.
Our algorithm, AutoMixAlign (AMA), automatically mixes datasets during LLM training to balance performance across multiple tasks. 
AMA has two phases. 
First, AMA trains \textit{specialist models} on each task dataset individually, yielding a strong specialized model for each task.
Second, AMA trains a generalist model to match the specialist model performance by optimizing a novel minimax optimization problem that prioritizes more challenging tasks---tasks with large \emph{excess loss}~\cite{he2024robust, xie2024doremi}, defined as the difference between the  generalist model's losses and the specialist models' losses. We present two algorithms for this optimization: a reweighting algorithm that adaptively reweights tasks in the objective, and a resampling algorithm that adaptively adjusts how much data is sampled from each task.

Both algorithms enjoy theoretical guarantees. The reweighting algorithm has an $O({1}/{\sqrt{T}})$ convergence rate in the convex setting, following immediately from~\citet{sagawa2019distributionally}. For the resampling algorithm, we prove convergence by modeling it as a two-player zero-sum game in which the learner aims to minimize the excess losses, while the adversary shifts the mixture distribution to sample more from tasks with larger excess losses. Using online learning techniques such as EXP3~\cite{auer2002finite}, we prove that the resampling algorithm has an $O({1}/{\sqrt{T}})$ convergence rate in the convex case. 


We conduct extensive experiments demonstrating that AMA effectively balances helpfulness, harmlessness, and reasoning tasks.
In all experiments, AMA outperforms standard baselines such as uniform data mixing and model merging, which often perform poorly on one or more tasks.
AMA improves average performance over standard DPO with uniform data mixing by up to 9.42\% while maintaining robust performance across each task.
%
In summary, we make the following contributions: 
\begin{enumerate}
    \item We introduce AMA, a multi-task learning algorithm that adaptively prioritizes more challenging tasks via reweighting or resampling.
    \item We prove that the resampling-based AMA algorithm has a $O({1}/{\sqrt{T}})$ convergence rate in the convex case. 
    \item We show that AMA has strong empirical performance, improving over standard data mixing approaches by up to 9.42\%.
\end{enumerate}

\section{The Setup}

We assume we have $k$ preference datasets, one for each task $i$. Each dataset is a collection of triples: $\sD_i = \{(x^{(i)}_{1}, y^{(i)}_{1, +}, y^{(i)}_{1, -}), \ldots, (x^{(i)}_n, y^{(i)}_{n,+}, y^{(i)}_{n,-})\}$ with $x^{(i)}_j$ denoting the $j$th query, $y^{(i)}_{j,+}$ denoting the \emph{preferred} response on the query, and $y^{(i)}_{j,-}$ denoting the \emph{not} preferred response on the query. Each dataset may correspond to different tasks such as coding, math, general helpfulness, or safety. 

We focus on preference learning using DPO~\cite{rafailov2023direct}. Let $\pi_\vtheta$ denote a language model parameterized by $\vtheta$ with $\pi_\vtheta(y|x)$ denoting the probability that the model outputs response $y$ given prompt $x$. In DPO, we are given a \emph{reference} model $\pi_\texttt{ref}$ and use it to initialize $\pi_\vtheta(y|x)$. The loss for a particular sample $(x,y_+,y_-)$ is as follows: $\sL( \vtheta ; x, y_+, y_- ) :=$
\begin{align*}
     - \log\left( \sigma \left( \beta \frac{\pi_\vtheta(y_+ | x)}{\pi_\texttt{ref}(y_+ | x)} - \beta \frac{\pi_\vtheta(y_- | x)}{\pi_\texttt{ref}(y_- | x)} \right)\right)
\end{align*}
where $\sigma$ is the sigmoid function and $\beta > 0$ is a hyperparameter.  

To train a model via DPO across $k$ task datasets, the standard practice optimizes a reweighted loss over all datasets \citep{ivison2024UnpackingDA}:
\begin{equation}
    \min_\vtheta \sum_{i=1}^k \alpha_i \sum_{(x, y_+, y_-) \in \sD_i} \sL(\vtheta ; x, y_+, y_-)
    \label{eq:standard}
\end{equation}
where $\alpha_1, \ldots, \alpha_k \in \mathbb{R}$ are positive weights. 
In practice, two choices of dataset weightings are typically used: (1) \text{uniform weighting},  $\alpha_i = 1$ for all $i \in  [k]$, and (2) \text{task-normalized weighting}, $\alpha_i = \frac{1}{|\sD_i|}$ for all $i \in [k]$. 

Both weighting choices have several limitations. Uniform weighting is sensitive to the size of task datasets; if $|\sD_i| > |\sD_j|$, then task $i$ receives more weight since it contributes more to the total loss than task $j$. 
As a result, practitioners often need to run many experiments to determine a dataset mixture that yields strong performance across all tasks \citep{ivison2024UnpackingDA}. 
Task-normalized weighting ensures all tasks are weighted equally by normalizing by the dataset size. However, this rigidity can be suboptimal; if some tasks are easier to learn than others, giving all tasks equal weight may cause the model to focus too much on easy tasks at the expense of harder ones.

To address these challenges with standard data mixing approaches in post-training, in the next section, we introduce AutoMixAlign (AMA), our new framework for balancing performance across tasks. 
To simplify our notation, we henceforth let $z = (x,y_+, y_-)$ denote an example and write $\mathcal{L}(\vtheta, z) :=\sL(\vtheta ; x, y_+, y_-)$. 

\section{AutoMixAlign (AMA)}

In this section, we present AMA. 
We derive the AMA optimization problem, describe two algorithms to optimize it, and then provide theoretical analysis.

\subsection{Deriving the AMA Optimization Problem}
Suppose we have model parameters $\vtheta_1, \dots, \vtheta_k$ where each $\vtheta_i$ was obtained via DPO training on $\sD_i$.
We refer to $\vtheta_i$ as the \textit{specialist} model parameters for task $i$ since $\vtheta_i$ should perform well on task $i$ but may perform poorly on the remaining tasks.\footnote{We train specialist models on individual datasets to ensure they achieve strong performance, though alternative approaches may exist. For instance, it may be possible to train a strong specialist for two tasks $i$ and $j$ by training on $\sD_i \cup \sD_j$.}
%
Since task losses tend to be predictive of performance~\citep{chen2024scaling}, a natural approach would be to find model parameters $\vtheta$ that match the losses achieved by $\vtheta_i$ in each task $i$ by solving the following feasibility problem:
\begin{equation}
\begin{split}
    &\text{Find } \vtheta \text{ s.t. } \quad \forall i\in [k], \\
    &\frac{1}{|\sD_i|}\left[ \sum_{z \in \sD_i} \sL(\vtheta, z) -
    \sum_{z \in \sD_i} \sL(\vtheta_i, z) \right] \leq 0.
\end{split}
\label{eq:feasibility}
\end{equation}
LLMs are typically sufficiently expressive so that there exists a solution to feasibility problem~\ref{eq:feasibility}. However, since this problem is not computationally tractable, we relax it into a minimization problem over constraint violations:
\begin{equation}
\label{eq:minimax_unrelaxed}
\begin{split}
    &\min_\vtheta \max_{i \in [k]} \max 
    \left\{\vphantom{\left[ \sum_{z \in \sD_i} \sL(\vtheta, z) - \sum_{z \in \sD_i} \sL(\vtheta_i, z)  \right]}
    \right. \\
    &\frac{1}{|\sD_i|} \left[ \sum_{z \in \sD_i} \sL(\vtheta, z) - \sum_{z \in \sD_i} \sL(\vtheta_i, z)  \right], 0 
    \left.\vphantom{\left[ \sum_{z \in \sD_i} \sL(\vtheta, z) - \sum_{z \in \sD_i} \sL(\vtheta_i, z)  \right]}\right\}.
\end{split}
\end{equation}
%
%
Unfortunately, we cannot apply stochastic gradient descent directly to optimization problem~\ref{eq:minimax_unrelaxed} because the outer $\max\{\cdot, 0\}$ function makes the gradient computed over a random mini-batch a biased estimator of the true gradient with respect to the full data distribution. To make this optimization problem~\ref{eq:minimax_unrelaxed} tractable, we can upper bound it with the following optimization problem:
\begin{equation}
\label{eq:ama}
\begin{split}
    \min_\vtheta \max_{i \in [k]} \frac{1}{|\sD_i|} \sum_{z \in \sD_i}\max \left\{  \sL(\vtheta, z) - \sL(\vtheta_i, z), 0 \right\}.
\end{split}
\end{equation}
Optimization problem~\ref{eq:ama} is the core optimization problem that AMA aims to solve. In this formulation, $\sL(\vtheta, z) - \sL(\vtheta_i, z)$ denotes the excess loss for sample $z$ on task $i$, and $\sE(\vtheta, \vtheta_i, z) \coloneqq \max \left\{\sL(\vtheta, z) - \sL(\vtheta_i, z), 0 \right\}$ is the \textit{clipped} excess loss, quantifying how much our generalist model’s loss deviates from the loss achieved by our specialist model $\vtheta_i$. 
The excess loss has been used in prior works to prevent optimization from focusing on tasks that have already been learned (\textit{i.e.}, tasks with zero excess loss) at the expense of more challenging tasks~\cite{he2024robust, xie2024doremi}.
For completeness, we further discuss the excess loss in Appendix~\ref{app:excess_loss}.\footnote{
Since specialist model losses $\sL(\vtheta_i, z)$ are fixed throughout training, we reduce AMA's memory footprint by pre-computing  $\sL(\vtheta_i, z)$ and adding them to a new column in each $\sD_i$ before training our generalist model. }

How do we interpret optimization problem~\ref{eq:ama}? By minimizing the maximum clipped excess loss, this optimization aims to train a generalist model that matches the performance of all specialist models $\vtheta_i$. Toward this aim, it focuses on the task with the largest average clipped excess losses. In particular, once it has learned a task so that all the clipped excess losses are zero, it stops optimizing that task; if $\vtheta_i$ maximizes performance on task $i$, then optimizing beyond $\sL(\vtheta_i, z)$ has no benefit and may eventually lead to overfitting.
In the following sections, we discuss two different algorithms to solve this optimization problem: objective reweighting and task resampling.


%

\begin{algorithm*}[h]
\caption{AutoMixAlign Resampling (AMA-S)}
\label{alg:ama}
\begin{algorithmic}[1]
\STATE \textbf{Inputs:} Task datasets $\sD_1, \dots, \sD_k$, specialist model parameters $\vtheta_1, \dots, \vtheta_k$, batch size $b$, smoothing parameter $c\in(0,1)$, training steps $T$
\STATE \textbf{Outputs:} Trained model parameters $\vtheta$
\STATE For all tasks, compute $\sL(\vtheta_i, z)$ for each $z \in \sD_i$, and add them to a new column in $\sD_i$ \label{line:precompute}
\STATE Initialize model parameters $\vtheta$
\STATE $\bm{q}^{(1)} \gets (\frac{1}{k},\dots,\frac{1}{k})$
\FOR{$t = 1, \dots, T$}
    \STATE $\alpha^{(t)}_i \gets (1-c)q^{(t)}_i + c\frac{1}{k}, \forall i$
    \STATE $\sT \sim \text{Multinomial}(\alpha^{(t)}_1, \dots, 
    \alpha^{(t)}_k, b)$
    \STATE $\sB \gets \{z \sim \sD_j : \forall j \in \sT\}$
    \STATE $\sB_i \gets \sB \cap \sD_i$ for each task $i$ 
    \STATE Update $\bm{q}^{(t)}$ using  EXP3 ~\cite{auer2002finite}:
    $$
    \sE(\vtheta, \vtheta_i, z) \coloneqq \max\{ \mathcal L(\vtheta, z)-\mathcal L(\vtheta_i, z), 0\}
    $$
    $$
    q^{(t+1)}_i \gets q^{(t)}_i \exp\left\{ \frac{1}{q^{(t)}_i}\frac{1}{|\sB_i|}\sum_{z\in\sB_i} \sE(\vtheta, \vtheta_i, z)\right\}
    $$
    $$
    q^{(t+1)}_i \gets \frac{q^{(t+1)}_i}{\sum_{j=1}^k q^{(t+1)}_j} 
    $$
    \STATE Update $\vtheta$ using one step of gradient descent on the loss 
    $$
    \frac{1}{|\sB|} \sum_{z\in\sB} \mathcal \max\{\sL(\vtheta, z)-\mathcal \sL(\vtheta_i, z), 0\}
    $$
    
\ENDFOR
\end{algorithmic}
\end{algorithm*}

\subsection{Reweighting Algorithm (AMA-R)}


The optimization problem~\ref{eq:ama} is equivalent to 
\begin{equation}
\label{eq:ama_r}
\begin{split}
    \min_\vtheta \max_{\valpha \in \Delta^k} \sum_{i=1}^k \alpha_i \frac{1}{|\sD_i|} \sum_{z \in \sD_i} \sE(\vtheta, \vtheta_i, z)
\end{split}
\end{equation}
where $\Delta^k = \{\valpha \in \mathbb{R}^k : \sum_{i} \alpha_i = 1, \alpha_i \geq 0 \}$ is a probability simplex. 
To optimize this new objective, we follow \citet{sagawa2019distributionally} and interleave gradient updates on task weights $\valpha$ and model parameters $\vtheta$, increasing the weight on tasks with larger excess losses.
Specifically, we sample a batch of data from each task uniformly at random,
update $\valpha$ using exponentiated gradient ascent on the clipped excess loss of each task, and then update $\vtheta$ using the loss $\sum_{i=1}^k \alpha_i \frac{1}{|\sD_i|} \sum_{z \in \sD_i} \sE(\vtheta, \vtheta_i, z)$ and a standard first-order optimization algorithm. Since this algorithm reweights the objective using $\alpha$, we call it AMA Reweighting (AMA-R).
Due to space constraints, we present this algorithm in Algorithm~\ref{alg:ama_r} of Appendix~\ref{app:algorithms}.
For this interleaved optimization approach, \citet{sagawa2019distributionally} previously established a convergence rate of $O({1}/{\sqrt{T}})$ in the convex setting.\footnote{While the convergence rate of AMA-R follows from  \citet{sagawa2019distributionally}, we note that AMA-R optimizes a different objective; \citet{sagawa2019distributionally} optimizes the loss $\sL(\vtheta, z)$ whereas AMA-R optimizes the clipped excess loss $\sE(\vtheta, \vtheta_i, z)$.}
%

\subsection{Resampling Algorithm (AMA-S)}

The reweighting algorithm described in the previous section prioritizes tasks by adjusting the objective weights $\valpha$. However, when some weights $\alpha_i$ are very small, this method is inefficient; we incur the same cost to compute each task's contribution to the gradient, but tasks with small weights have little impact on the model update.
To illustrate, suppose the extreme: all weight is concentrated on task $i$ ($\alpha_i = 1, \alpha_j = 0 \; \forall j \neq i$). Letting $b$ denote the batch size,
AMA-R samples $b/k$ examples from every task, but examples from all tasks other than task $i$ are wasteful as they don't contribute to the gradient calculation. 

We can alternatively use $\valpha$ to control the probability of sampling from each task's dataset when optimizing problem~\ref{eq:ama}. We refer to this resampling algorithm as AMA Resampling (AMA-S) and provide its implementation in Algorithm~\ref{alg:ama}. 
Like AMA-R, AMA-S alternates between updating $\valpha$ and $\vtheta$. For a given $\valpha$, we sample batches according to $\text{Multinomial}(\alpha_1, \dots, \alpha_k)$ and update $\vtheta$ using standard first-order optimization. The $\valpha$ updates use the EXP3 online learning algorithm~\citep{auer2002finite} to adaptively adjust sampling probabilities to focus computation on tasks with large excess loss. 
For EXP3 updates, we maintain an internal weight vector $\bm{q}\in\Delta^k$ and compute the sampling distribution as $\valpha = (1-c)\bm{q}+c\frac{1}{k}\bm{1}$, where $c\in(0,1)$ smooths the sampling distribution towards a uniform distribution.
With AMA-S, all examples from every task have equal weight in the gradient calculation, improving the computational efficiency when task weights are unequal.
In Section~\ref{app:reweighting_vs_resampling}, we empirically validate this argument.


Since the convergence proof given by \citet{sagawa2019distributionally} applies only to the reweighting algorithm, we provide a new convergence result for this resampling based algorithm.
In the next section, we will show that AMA-S can be viewed as a two-player zero-sum game between an $\valpha$-player and a $\vtheta$-player and prove a standard $O(1/\sqrt{T})$-style convergence rate in the convex case.

\subsection{Theoretical Analysis}
\label{sec:theory}

In this section, we show that AMA-S is a theoretically principled multi-task learning algorithm. Again, we consider the setting where we have $k$ datasets $\sD_1, \ldots, \sD_k$, each with $\sD_i = \{x^{(i)}_1, \ldots, x^{(i)}_{n}\}$. We consider the function class $\{ \vtheta \in \Theta \}$ and suppose $\sD_i \subset \mathcal{Z} $ for all $i \in [k]$. For simplicity, we abstract away from the excesses losses and consider the following:
\begin{align*}
        \min_{\vtheta \in \Theta} \max_{\valpha \in \Delta^k} \sum_i \alpha_i \frac{1}{|\sD_i|} \sum_{z \in \sD_i} \sL( \vtheta, z).
\end{align*}
We can model this problem as a zero-sum game between two players, the $\vtheta$-player and the $\valpha$-player. The protocol is as follows. For round $t=1,2,\ldots$
\begin{enumerate}
    \item The  $\valpha$-player chooses $\bm{\alpha}_t \in \Delta^k$.
    \item The $\vtheta$-player chooses $\vtheta_t \in \Theta$.
    \item The $\vtheta$-player samples $i_t \sim \bm{\alpha}_t$.
    \item The $\vtheta$-player suffers loss $\frac{1}{|\sD_{i_t}|} \sum_{(x,y) \in \sD_{i_t}} \sL(\vtheta_{t}, x)$.
\end{enumerate}
For our convergence result, we require access to a $\vtheta$-player that satisfies the following property. 
\begin{definition}
    We say that the $\vtheta$-player is $C$-low regret if for any sequence of indices $\{i_1, \ldots i_T\}$, we have that 
    \begin{align*}
        & \sum_{t=1}^T \frac{1}{|\sD_{i_t}|} \sum_{z \in \sD_{i_t}} \mathcal{L}(\vtheta_t, z) & \\
        & \qquad \leq  \min_{\vtheta \in \Theta} \frac{1}{|\sD_{i_t}|} \sum_{z \in \sD_{i_t}} \mathcal{L}(\vtheta, z) + C\sqrt{T}.
    \end{align*}
\end{definition}
\noindent Now, we show a case where such a $C$-low-regret $\vtheta$-player exists. Suppose that $z = (x,y)$, $f_\vtheta(x) = \vtheta \cdot x$, $\sL(\vtheta, x) = l(f_\vtheta(x), y)$. Note that we have $f_\vtheta(x) = w \cdot \phi(x)$ where $\phi$ is a feature extractor. Then, online gradient descent is an example of such a low-regret $\vtheta$-player. 

\begin{example}\label{ex:ogd}
    Let $l(f_\vtheta(x), y) = (\vtheta \cdot x - y)^2$, $\Theta = \{\vtheta : \norm{\vtheta}_2 \leq 1 \}$, $y \in [-1,1]$, and $x \in \{x : \norm{x}_2 \leq 1 \}$. Then, online gradient descent is $C$-low-regret algorithm with $C = 4\sqrt{2}  $. 
\end{example}

\noindent See the Appendix for a proof based on online convex optimization \cite{shalev2012online}.

Now, we can prove our main theoretical result under mild simplifying assumptions. It shows that by framing AMA-S as a zero-sum two-player game, the $\vtheta$-player minimizes task losses while the $\valpha$-player adversarially upweights challenging tasks, leading to balanced convergence.

\begin{theorem}\label{thm:main_result}
    Fix $\epsilon, \delta \in (0,1)$. Suppose that the $\vtheta$-player is a $C$-low-regret algorithm.  Suppose $\max_{\vtheta \in \Theta} \sL(\vtheta, z) \leq 1$ for all $z \in \mathcal{Z}$. Then, if $T \geq \max(\frac{64 C^2}{\epsilon^2}, \frac{32 k \log(k)}{\epsilon})$ and $\eta = \frac{1}{4k}$, we have that
    \begin{align*}
        &  \max_{i \in [k]} \frac{1}{T}\sum_{t=1}^T \frac{1}{|\sD_i|} \sum_{z \in \sD_i} \sL(\vtheta_t, z) \\
        & \qquad \leq \min_{\vtheta \in \Theta} \max_{i \in [k]} \frac{1}{|\sD_i|} \sum_{z \in \sD_i} \sL(\vtheta, z) + \epsilon.
    \end{align*}
\end{theorem}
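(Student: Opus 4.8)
The plan is to analyze the game as a standard no-regret vs. no-regret dynamic and invoke a minimax/equilibrium argument. The $\valpha$-player runs EXP3 (or more precisely, since losses are observed, exponentiated gradient / Hedge on the smoothed estimates) on the reward sequence $g_t(i) = \frac{1}{|\sD_i|}\sum_{z\in\sD_i}\sL(\vtheta_t,z) \in [0,1]$, and the $\vtheta$-player is $C$-low-regret by assumption. First I would write down the two regret bounds. For the $\valpha$-player, the multiplicative-weights/EXP3 guarantee with the stated $\eta = \tfrac{1}{4k}$ and horizon $T$ gives, for every fixed $i^\star\in[k]$,
\begin{align*}
    \sum_{t=1}^T \langle \bm\alpha_t, g_t\rangle \;\geq\; \sum_{t=1}^T g_t(i^\star) - R_\alpha(T),
\end{align*}
where $R_\alpha(T) = O\!\big(\tfrac{\log k}{\eta} + \eta k T\big)$; with $\eta = \tfrac{1}{4k}$ this is $O(k\log k + T/4)$-scaled — I would track constants carefully so that the $\tfrac{T}{8}$-type slack becomes $\leq \tfrac{\epsilon}{2}T$ exactly when $T \geq \tfrac{32 k\log k}{\epsilon}$, matching the hypothesis. (The factor-of-$k$ in the learning rate and the sampling noise from $i_t\sim\bm\alpha_t$ is why the bound has $kT$ rather than $\sqrt{T\log k}$; I should double check whether the theorem really wants $\eta=\tfrac1{4k}$ or whether that is tuned precisely to make the two threshold conditions on $T$ come out as stated.) For the $\vtheta$-player, the $C$-low-regret property applied to the realized index sequence $i_1,\dots,i_T$ gives
\begin{align*}
    \sum_{t=1}^T g_t(i_t) \;\leq\; \min_{\vtheta\in\Theta}\sum_{t=1}^T \frac{1}{|\sD_{i_t}|}\sum_{z\in\sD_{i_t}}\sL(\vtheta,z) + C\sqrt{T}.
\end{align*}

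Next I would handle the randomness: $i_t\sim\bm\alpha_t$ means $\langle\bm\alpha_t,g_t\rangle = \E[g_t(i_t)\mid \mathcal F_{t-1}]$, so $\sum_t(\langle\bm\alpha_t,g_t\rangle - g_t(i_t))$ is a bounded martingale difference sum; Azuma–Hoeffding gives that it is $O(\sqrt{T\log(1/\delta)})$ with probability $1-\delta$, which is why $\delta$ appears in the statement and why the condition $T\geq \tfrac{64C^2}{\epsilon^2}$ (absorbing the $\sqrt T$ terms into $\tfrac\epsilon2 T$) is phrased the way it is. Chaining the two regret bounds through this concentration step: for any fixed $i^\star$ and any comparator $\vtheta$,
\begin{align*}
    \sum_{t=1}^T g_t(i^\star) - R_\alpha(T) &\leq \sum_{t=1}^T\langle\bm\alpha_t,g_t\rangle \leq \sum_{t=1}^T g_t(i_t) + O(\sqrt{T\log(1/\delta)}) \\
    &\leq \sum_{t=1}^T \frac{1}{|\sD_{i_t}|}\sum_{z\in\sD_{i_t}}\sL(\vtheta,z) + C\sqrt T + O(\sqrt{T\log(1/\delta)}).
\end{align*}

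Then I would take $\vtheta$ on the right-hand side to be the minimax optimizer $\vtheta^\star \in \argmin_\vtheta \max_i \frac{1}{|\sD_i|}\sum_{z\in\sD_i}\sL(\vtheta,z)$, so that each term $\frac{1}{|\sD_{i_t}|}\sum_z \sL(\vtheta^\star,z) \leq \mathrm{OPT} := \min_\vtheta\max_i(\cdots)$, making the RHS at most $T\cdot\mathrm{OPT} + (C + O(\sqrt{\log(1/\delta)}))\sqrt T$. Dividing by $T$, taking a maximum over $i^\star$ on the left, and identifying $\max_{i}\tfrac1T\sum_t g_t(i^\star)$ with $\max_i \tfrac1T\sum_t \tfrac1{|\sD_i|}\sum_z\sL(\vtheta_t,z)$, I get the claimed inequality provided $\tfrac{R_\alpha(T)}{T} + \tfrac{(C+\cdots)}{\sqrt T} \leq \epsilon$; the two stated lower bounds on $T$ are exactly what make each of these two error terms at most $\epsilon/2$.

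The main obstacle is bookkeeping the constants in the EXP3/Hedge regret bound so the threshold $T \geq \tfrac{32k\log k}{\epsilon}$ comes out precisely — in particular confirming that the variance/bias introduced by sampling $i_t$ from $\bm\alpha_t$ (rather than using full feedback) is what forces the $O(\eta k T)$ term and the choice $\eta=\tfrac1{4k}$, and checking whether $c$ (the EXP3 smoothing parameter) plays a role here or whether the theorem is implicitly taking the full-information Hedge analysis on observed $g_t$. A secondary subtlety is that the $C$-low-regret definition as stated compares against $\min_\vtheta \tfrac{1}{|\sD_{i_t}|}\sum_z\mathcal L(\vtheta,z)$ with an un-summed index $i_t$ on the left of the $\leq$ (likely a typo for $\min_\vtheta\sum_t\tfrac{1}{|\sD_{i_t}|}\sum_z\mathcal L(\vtheta,z)$); I would use the natural summed-over-$t$ version, which is the one that makes the chaining go through.
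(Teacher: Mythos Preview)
Your high-level plan---treat the dynamics as a two-player zero-sum game, bound the $\valpha$-player with an EXP3-type regret, bound the $\vtheta$-player by the $C$-low-regret assumption, insert a martingale concentration step for the random draws $i_t\sim\bm\alpha_t$, and chain---is exactly the paper's route. Your observation about the typo in the $C$-low-regret definition is also correct: the paper uses the summed-over-$t$ version.

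There is, however, a real gap in how you handle the EXP3 second-order term. You write $R_\alpha(T)=O(\log k/\eta+\eta kT)$ and then say that with $\eta=\tfrac{1}{4k}$ this ``$\tfrac{T}{8}$-type slack becomes $\le \tfrac{\epsilon}{2}T$ when $T\ge\tfrac{32k\log k}{\epsilon}$.'' But $\eta kT=T/4$ is a \emph{constant fraction of $T$}; dividing by $T$ leaves $1/4$, which never becomes $\le\epsilon/2$ for small $\epsilon$. Your chain as written would prove only $\max_i \tfrac{1}{T}\sum_t L_i(\vtheta_t)\le \mathrm{OPT}+\tfrac14+o(1)$.

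The paper avoids this as follows. The EG/EXP3 second-order term is not crudely bounded by $\eta kT$; it is kept in the data-dependent form
\[
\eta\sum_{t=1}^T\sum_{i=1}^k q_{t,i}z_{t,i}^2 \;\le\; 4k\eta\sum_{t=1}^T L_{i_t}(\vtheta_t)^2 \;\le\; 4k\eta\sum_{t=1}^T L_{i_t}(\vtheta_t),
\]
using $L_{i_t}(\vtheta_t)\in[0,1]$. With $\eta=\tfrac{1}{4k}$ this becomes exactly $\sum_t L_{i_t}(\vtheta_t)$, the \emph{realized loss of the $\vtheta$-player}. This term is then \emph{not} treated as a small regret residual but is absorbed into the $\vtheta$-player step, where the $C$-low-regret assumption yields $\tfrac{1}{T}\sum_t L_{i_t}(\vtheta_t)\le \mathrm{OPT}+\epsilon/8$. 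That recycling is the key trick you are missing: the choice $\eta=\tfrac{1}{4k}$ is tuned precisely so that $4k\eta=1$ and the second-order term collapses into the $\vtheta$-player's loss rather than standing as an independent error.

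A secondary structural difference: you concentrate $\langle\bm\alpha_t,g_t\rangle$ against the realized $g_t(i_t)$ via Azuma (bounded by $1$). The paper instead concentrates the importance-weighted estimate $\langle e_i,\tfrac{L_{i_t}(\vtheta_t)}{\alpha_{t,i_t}}e_{i_t}\rangle$ against the true $L_i(\vtheta_t)$ for each fixed arm $i$, which requires a Bernstein-type martingale inequality because the IW terms have range $O(k)$ but variance $O(k)$. Your Azuma step is not wrong, but it does not by itself supply the EXP3 regret bound on the true rewards $g_t$; the IW-to-true concentration (the paper's Step~1) is what actually closes that loop, and you have glossed over it by writing $\sum_t\langle\bm\alpha_t,g_t\rangle\ge\sum_t g_t(i^\star)-R_\alpha(T)$ as if it were a black-box full-information bound.
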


\noindent This result establishes a standard $O({1}/{\sqrt{T}})$ convergence rate for the AMA-S algorithm on multi-task learning under mild assumptions. The argument extends classical arguments from \cite{shalev2016minimizing} to the multi-task setting and beyond the realizable setting to the agnostic setting where there may not be a single predictor that perfectly fits the data.

\section{Experiments}
\label{sec:experiments}

 
We consider the following baselines:
\begin{enumerate}
    \item \textbf{Standard optimization (Standard)}: Combine all task datasets and minimize the total loss---the standard approach in DPO training. 
    \item \textbf{Standard optimization with uniform task sampling (Standard Uniform)}: Minimize the total loss across tasks but sample data from each task with probability $1/k$. 
    \item \textbf{Model Averaging}~\citep{yang2024model}: A heuristic model merging method that averages the parameters of the specialist models, giving equal weight to each model.
\end{enumerate}

\begin{table*}[h]
    \centering
    \resizebox{\textwidth}{!}{
    \begin{tabular}{l|c|c|c|c|c}
    \hline
    \rowcolor[HTML]{EFEFEF} 
    \textbf{Model} & \textbf{IFEval (\%)} $\uparrow$ & \textbf{Alpaca Eval (\%)} $\uparrow$ & \textbf{MBPP (\%)} $\uparrow$ & \textbf{HumanEval (\%)}$\uparrow$ & \textbf{Average (\%)} $\uparrow$\\ \hline
    Zephyr-7b-sft-full & 35.30 &  8.64 & 49.90 & 50.46 & 36.08 \\ \hline
    Chatbot Arena 2024 Specialist & 43.44 & 16.05 & 44.60 & 50.61 & 38.68 \\ 
    CodeUltraFeedback  Specialist & 35.86 &  9.93 & 52.16 & 57.32 & 38.82 \\ \hline\hline
    Standard              & 39.74 & 15.23 & 37.77 & 47.82 & 35.14 \\
    Standard Uniform      & 42.51 & 11.20 & 36.33 & 51.83 & 35.47 \\ 
    Model Averaging    & 42.14 & 11.51 & 48.56 & \textbf{55.49} & 39.43 \\ 
    \hline
    AMA Reweighting        & 42.51 & \textbf{18.15} & \textbf{51.44} & 54.88 & \textbf{41.75} \\
    AMA Resampling        & \textbf{43.44} & 15.61 & 51.08 & 50.61 & 40.19 \\ \hline
    \end{tabular}
    }
    \caption{Setup 1, Helpfulness (Chatbot Arena 2024) + Coding (CodeUltraFeedback).}
    \label{tab:chatbot_codeuf}
\end{table*}

\begin{figure}
    \centering
    \includegraphics[width=\linewidth]{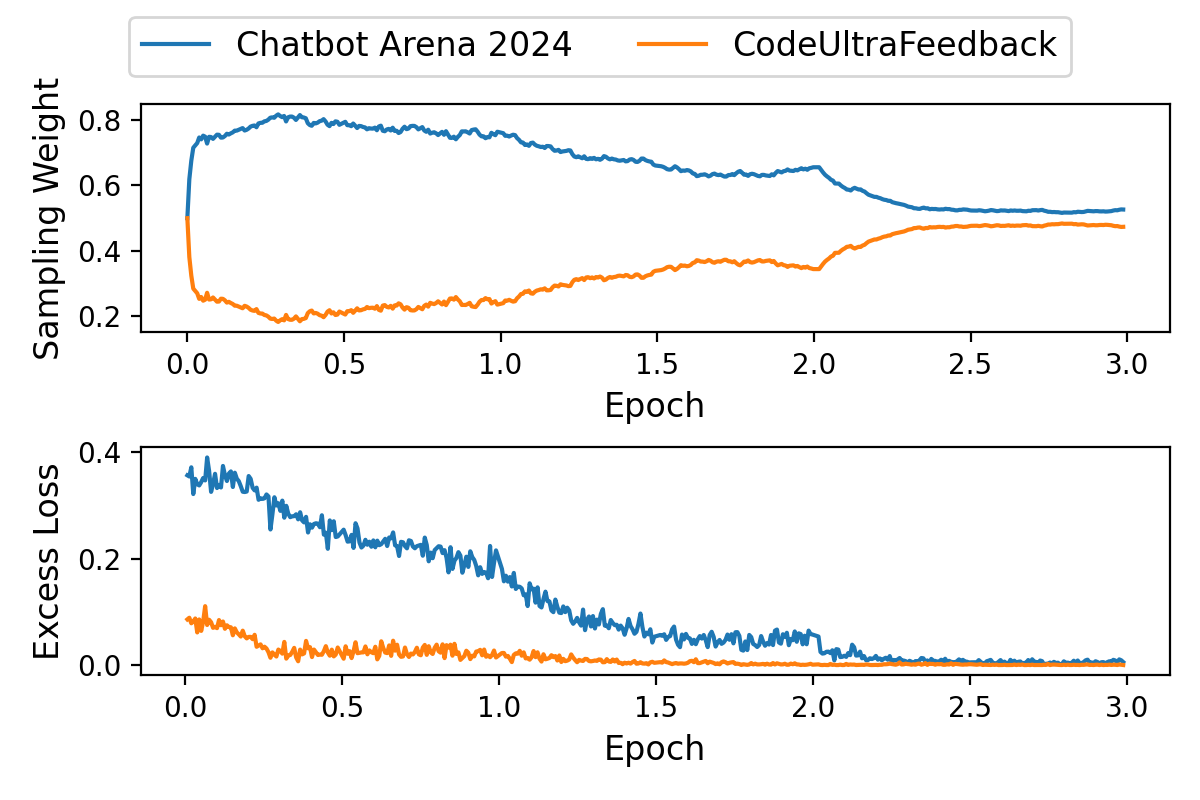}
    \caption{Excess losses and task weights for AMA-S in Setup 1, Helpfulness (Chatbot Arena 2024) + Coding (CodeUltraFeedback).}
    \label{fig:chatbot_codeuf_metrics_ama_s}
\end{figure}

%
%
We provide pseudocode for Standard and Standard Uniform in Algorithm~\ref{alg:standard} and~\ref{alg:standard_uniform} of Appendix~\ref{app:algorithms}.

Our experiments focus on
Helpfulness (Alpaca Eval 2.0~\citep{dubois2024length} and IFEval ~\citep{zhou2023instruction}), Harmlessness (Toxigen~\citep{hartvigsen2022toxigen}), and Coding (MBPP~\citep{austin2021program}, HumanEval~\citep{chen2021evaluating}). 
We consider the following datasets: UltraFeedback ~\citep{cui2023ultrafeedback}, Chatbot Arena 2024~\citep{zheng2023judging}, CodeUltraFeedback~\citep{weyssow2024codeultrafeedback}, and PKU-SafeRLHF with harmlessness
preference labels (SafeRLHF)~\citep{beavertails}.
Each dataset primarily improves performance on only one of three tasks, so we combine datasets to investigate how to produce a model that performs well across all tasks. 
We consider the following dataset combinations: Chatbot Arena 2024 + CodeUltraFeedback (Helpfulness + Coding), UltraFeedback + SafeRLHF (Helpfulness + Harmlessness), and Chatbot Arena 2024 + Coding + SafeRLHF (Helpfulness + Coding + Harmlessness).

In all experiments, we use the open source Zephyr 7b SFT Full~\citep{tunstall2023zephyr} as the base model, and train all models using DPO~\cite{rafailov2023direct}.
To obtain specialist models, we train separate models via DPO on each task dataset for 3 epochs.
We provide full training details for both specialist and generalist models Appendix~\ref{app:training} as well as evaluation details in Appendix~\ref{app:eval}. 
%
%
Due to space constraints, we provide ablations on different features of AMA and different model merging approaches in Appendix~\ref{app:ablations}.

\begin{table*}[h]
    \centering
    \begin{tabular}{l|c|c|c|c}
    \hline
    \rowcolor[HTML]{EFEFEF}  
    \textbf{Model}         & \textbf{IFEval (\%)}   $\uparrow$ & \textbf{Alpaca Eval (\%)}   $\uparrow$ & \textbf{Toxigen (\%)}   $\uparrow$ & \textbf{Average (\%)}   $\uparrow$\\ \hline
    Zephyr-7b-sft-full     & 35.30 & 8.64  & 51.60  & 31.85  \\ \hline
    UltraFeedback Specialist          & 42.51 & 17.59 & 36.10  & 32.07  \\ 
    SafeRLHF Specialist        & 29.02 & 6.66  & 100.00 & 45.23  \\ \hline\hline
    Standard                   & 41.22 & 6.90  & \textbf{99.99}  & 49.37  \\ 
    Standard Uniform           & 44.18 & 15.51 & 99.93  & 53.21  \\ 
    Model Averaging         & 39.37 & 12.21 & 99.96  & 50.51  \\ 
    \hline
    AMA Reweighting         & \textbf{44.55} &  16.21 & 99.74  & 53.50  \\
    AMA Resampling              & \textbf{44.55} & \textbf{16.88} & \textbf{99.99}  & \textbf{53.81}  \\ \hline
    \end{tabular}
    \caption{Setup 2, Helpfulness (UltraFeedback) + Harmlessness (SafeRLHF).}
    \label{tab:uf_harm}
\end{table*}

\begin{table*}[h]
    \centering
    \resizebox{\textwidth}{!}{
    \begin{tabular}{l|c|c|c|c|c|c}
    \hline
    \rowcolor[HTML]{EFEFEF} 
    \textbf{Model} & \textbf{IFEval (\%)}   $\uparrow$ & \textbf{Alpaca Eval (\%)}   $\uparrow$ & \textbf{MBPP (\%)}   $\uparrow$ & \textbf{HumanEval (\%)}   $\uparrow$ & \textbf{Toxigen (\%)}   $\uparrow$ & \textbf{Average (\%)}   $\uparrow$\\ \hline
    Zephyr-7b-sft-full & 35.30 & 8.64  & 49.90 & 50.46 & 51.60  & 39.18 \\ \hline
    Chatbot Arena 2024 Specialist & 43.44 & 16.05 & 44.60 & 50.61 & 69.77  & 44.89 \\ 
    CodeUltraFeedback Specialist & 35.86 & 9.93  & 52.16 & 57.32 & 76.31  & 46.31 \\ 
    SafeRLHF Specialist & 29.02 & 6.66  & 48.56 & 50.61 & 100.00 & 46.97 \\ \hline\hline
    Standard              & 38.63 & 11.32 & 50.00 & 54.88 & 99.91  & 50.95 \\ 
    Standard Uniform      & 47.50 & 11.78 & 27.70 & 37.80 & \textbf{99.99}  & 44.96 \\ 
    Model Averaging    & 42.70 & 12.28 & 52.16 & 53.66 & 96.70  & 51.50 \\ 
    \hline
    AMA Reweighting         & \textbf{48.43} & \textbf{17.77} & 54.32 & \textbf{55.49} & 95.91  & \textbf{54.38} \\

    AMA Resampling        & 43.62 & 13.04 & \textbf{55.76} & 53.66 & 99.80  & 53.18 \\ \hline
    \end{tabular}
    }
    \caption{Setup 3, Helpfulness (Chatbot Arena 2024) + Coding (CodeUltraFeedback) + Harmlessness (SafeRLHF).}
    \label{tab:chatbot_codeuf_harm}
\end{table*}

\subsection{Setup 1: Helpfulness + Coding}

Our first experiment focuses on balancing helpfulness and harmlessness using Chatbot Arena 2024 and SafeRLHF.
As shown in Table~\ref{tab:chatbot_codeuf}, our Chatbot Arena 2024 specialist  improves helpfulness but not coding.
On the other hand, our CodeUltraFeedback specialist marginally improves helpfulness but greatly improves coding.

From Table~\ref{tab:chatbot_codeuf}, we see that both AMA algorithms achieve the highest average scores and balance performance across both tasks. In fact, they are the only methods that do not degrade performance on at least one benchmark. 
On the other hand, Standard and Standard Uniform degrade coding performance while Model Averaging scores lower than both AMA algorithms on AlpacaEval and MBPP.

To understand how AMA balances tasks,
Figure~\ref{fig:chatbot_codeuf_metrics_ama_s} shows task-specific excess losses and task weights during AMA-S training. 
As expected, AMA-S initially shifts weights to the task with higher excess loss, Chatbot Arena 2024, and then increases the weight on CodeUltraFeedback as the gap between the two tasks decreases.
Since the excess loss on Chatbot Arena 2024 remains slightly larger than that of CodeUltraFeedback at convergence, the weight on Chatbot Arena 2024 also remains slightly larger.
We plot task weights and excess losses for AMA-R in Figure~\ref{fig:chatbot_codeuf_metrics_ama_r} of Appendix~\ref{app:training} and observe qualitatively similar behavior. 

\subsection{Setup 2: Helpfulness + Harmlessness}

Our second experiment focuses on balancing helpfulness and harmlessness using UltraFeedback and SafeRLHF.
As shown in Table~\ref{tab:uf_harm}, our UltraFeedback specialist improves helpfulness, but leaves substantial room for improvement on Toxigen. 
However, our SafeRLHF specialist achieves a perfect score on Toxigen but degrades helpfulness. 
%

In Table~\ref{tab:uf_harm}, both AMA algorithms achieve the highest average scores and achieve the highest scores on each benchmark individually. Here, AMA-S and AMA-R perform similarly.
We observe that Standard optimization can be sensitive to the task weighting:
Standard Uniform performs similarly to both AMA algorithms, but Standard \textit{degrades} performance on AlpacaEval significantly.
Model averaging offers only modest improvements on helpfulness benchmarks.
%

\subsection{Setup 3: Helpfulness + Coding + Harmlessness}

Our third experiment builds off of Setup 1 and focuses on balancing helpfulness, coding, and harmlessness using Chatbot Arena 2024, CodeUltraFeedback, and SafeRLHF.
As shown in Table~\ref{tab:chatbot_codeuf_harm}, the Chatbot Arena 2024 and CodeUltraFeedback specialists have significant room for improvement on Toxigen, while our SafeRLHF specialist achieves a perfect Toxigen score but degrades performance on helpfulness and coding.
%

As shown in Table~\ref{tab:chatbot_codeuf_harm}, both AMA algorithms achieve the highest average score across benchmarks while also improving on each benchmark individually. AMA-R outperforms AMA-S on IFEval, AlpacaEval, and HumanEval.  
We again observe that Standard optimization can be sensitive to task weighting: Standard scores 6 points higher on average than Standard Uniform.
Model Averaging balances task scores but nevertheless achieves relatively low scores.
%

\subsection{Resampling vs. Reweighting}
\label{app:reweighting_vs_resampling}

Although we can optimize the AMA objective using reweighting or resampling, we now show that AMA converges more slowly with certain dataset mixtures when using reweighting.
In particular, we show how \emph{uniformly} sampling across datasets (as done by AMA-R) can result in much slower convergence compared to adaptive sampling (as done by AMA-S). 
We train AMA-S and AMA-R for one epoch on a Coding + Harmlessness dataset mixture consisting of 1 copy of a $10^4$-sample subset of SafeRLHF and 10 copies of a 250-sample subset of CodeUltraFeedback \textit{each representing separate tasks} so that we have $k=11$ tasks.

How might AMA-R struggle in this setting?
Let $b$ denote the batch size so that one epoch corresponds to $\frac{1}{b}(250\cdot 10 + 10^4)$ updates. 
Since AMA-R samples tasks uniformly, it will in expectation observe $\frac{b}{11}\cdot \frac{1}{b}(250\cdot 10 + 10^4) \approx 1136$ SafeRLHF samples in one epoch, which may not be sufficient to produce a harmless model.
In contrast, AMA-S can adaptively increase the probability of sampling from SafeRLHF to improve harmlessness.

In Figure~\ref{fig:reweighting_vs_resampling}, we observe that AMA-S quickly converges to almost $100\%$ non-toxicity, whereas AMA-R reaches a maximum of $90\%$. To understand why, we also plot the probability of sampling from SafeRLHF with AMA-R (which is always $1/11$) and AMA-S. We see that AMA-S quickly places significantly more weight on SafeRLHF and thus observes more SafeRLHF data than AMA-R.



\begin{figure}
\centering
\includegraphics[width=\linewidth]{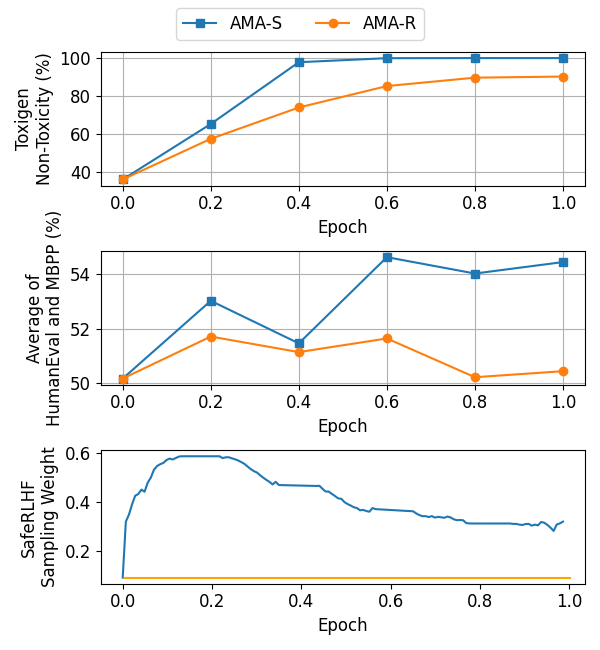}
\caption{AMA-R and AMA-S in Coding (CodeUltraFeedback) + Harmlessness (SafeRLHF).}
\label{fig:reweighting_vs_resampling}
\end{figure}

\section{Related Work}

\noindent \textbf{Data Mixing.} 
Prior works often use heuristic filters~\cite{brown2020language, du2022glam, chowdhery2023palm} or
ablations studies to produce a data mixture that yields strong performance~\cite{ivison2024UnpackingDA}. 
%
%
%
%
Recently, many works have leveraged group distributionally robust optimization (GroupDRO) to automatically determine an appropriate data mixture~\citep{oren2019distributionally, sagawa2019distributionally,  michel2021balancing, albalak2023efficient, xia2023sheared, xie2024doremi, he2024robust}. 
Similar to AMA, DRO methods use a minimax optimization to optimize the worst-case loss or excess loss.
However, AMA has two core differences: (1) AMA focuses on alignment rather than pretraining, and (2) AMA optimizes the clipped excess loss during model updates rather than the loss or excess loss.
We highlight other algorithmic differences between AMA and closely-related DRO-style algorithms in Appendix~\ref{app:ama_vs_doremi}.
\citet{pan2024scalebio} use bilevel minimax optimization to reweight tasks for LLM fine-tuning, though this approach also minimizes the loss rather than the clipped excess loss. \citet{de2024dynamic} consider dynamically weighting multiple dimensions of alignment, but they focus on controllable generation and two-stage RLHF which assume reward models are separately trained before LLM alignment.  \newline 


\noindent \textbf{Gradient Conflict Resolution.} Multi-task learning methods based on multiple-gradient descent (MGDA)~\citep{desideri2009multiple} compute gradients that reduces the loss of all tasks at every step in a balanced fashion~\citep{sener2018multi, navon2022multi, yu2020gradient, liu2021towards, liu2021conflict, fernando2022mitigating}. 
%
%
Other methods seek to eliminate conflicting components of task-specific gradients~\citep{chen2020just, yu2020gradient} or simply rescale task-specific gradients to have similar norms~\citep{chen2018gradnorm}. 
These gradient-based approaches differ fundamentally from data reweighting and resampling methods, as they address multi-task conflicts through geometric operations in gradient space rather than through adjustments to the loss or training distribution. \newline
%
%

\noindent \textbf{Model Merging.} Model merging methods combine the parameters of separate models to produce a new model that shares the capabilities of each individual model.  
%
%
%
The most basic approach is to uniformly average the parameters across models~\citep{mcmahan2017communication, stich2018local, wortsman2022model}, though non-uniform merging methods also exist~\citep{matena2022merging, jin2022dataless, tam2024merging}.
%
%
%
While model merging is cheaper than fine-tuning---the approach taken in this paper---model merging is heuristic in nature and generally leads to worse performance than fine-tuning ~\cite{yang2024model}. 






\section{Conclusion}

We introduced AutoMixAlign (AMA), an adaptive data mixing strategy that uses minimax optimization to prioritize tasks with larger excess loss. 
We provide two algorithms for AMA: one adaptively reweights how much each task contributes to the objective function (AMA-R), while the other adaptively updates how much data is sampled from each task during model updates (AMA-S).
Both algorithms are theoretically justified and have $O(1/\sqrt{T})$ convergence rates in the convex case. AMA-R's convergence result follows from \citet{sagawa2019distributionally}, and we provide a proof of AMA-S's convergence rate.
We evaluate both AMA algorithms on several multitask alignment setups, and observe that they outperform the standard alignment approach which simply optimizes the total loss across all tasks and also outperform model merging methods.
We outline directions for future exploration in Appendix~\ref{app:future_directions}.


\section{Limitations}
\label{sec:limitations}
\noindent \textbf{Specialist model training overhead.}
AMA computes excess losses using separate specialist models trained on each task dataset individually.
Training these specialist models can introduce significant  training overhead compared to other multi-task learning methods like MGDA~\citep{desideri2009multiple}, model merging~\citep{wortsman2022model}, or simply optimizing the total loss, especially when the number of tasks is large or when the size of each task dataset is large. 
However, we emphasize that AMA requires the compute equivalent to \textit{just two optimization runs of DPO.} 
More concretely, training separate specialist models for $m$ epochs requires $\sum_{i=1}m|\sD_i| = m|\sD|$ model updates, and training the generalist model requires an additional $m|\sD|$ model updates---a total of $2m|\sD|$ model updates.
In other words, the ratio of the number of training steps needed for AMA training vs. standard DPO does not depend on the number of tasks. 
Since the common practice in the literature is to run many data mixture ablations to optimize performance across many tasks (\textit{e.g.}, \citet{ivison2024UnpackingDA}), AMA in its current form can already improve the efficiency of the practice.
%
Existing techniques in the literature can reduce the expense of computing specialist losses. For instance, 
ExcessMTL~\citep{he2024robust} uses a diagonal estimation of the generalist model's Hessian matrix to approximate excess losses, while 
DoReMi and RegMix~\citep{liu2024regmix} estimate specialist losses using models that are much smaller than the generalist model.\footnote{In these works, the specialist models are called \textit{reference models}. We use the term ``specialist'' throughout this work to avoid confusion with the reference model $\pi_\text{ref}$ in the DPO loss.} 
While DoReMi~\cite{xie2024doremi} trains a single multitask model to compute excess losses, we note that this approach requires just as many model updates as training $k$ specialist models.
AMA has the added advantage that specialist models can be trained in parallel, reducing the total time required to train them by a factor of $1/k$.\newline

\noindent \textbf{AMA is specific to DPO.} Our experiments and theoretical contributions focus on preference alignment with DPO, though empirical findings may not transfer to other alignment techniques like on-policy learning or supervised fine-tuning. However, one could in principle borrow the algorithmic ideas discussed in this work and use reweighting and resampling to prioritize tasks in other parts of the learning stack, such as on-policy learning or supervised fine-tuning. 

\section*{Impact Statement}

This paper focuses on aligning LLMs to multiple tasks while achieving strong performance across each task individually and represents a significant step towards the development of generalist LLMs.
We showcase how our method encourages LLMs to balance its capabilities (\textit{e.g}, to be helpful while also being harmless), and  we expect that our findings will have a positive impact on the research community as well as industry practitioners.
While it is possible that researchers or practitioners might use our proposed method for negative purposes (\textit{e.g.} training an LLM to balance multiple toxic capabilities), we maintain that this work is largely positive, and that potential negative societal impacts of this work are minimal.
%






\bibliography{main}

\newpage
\appendix
\onecolumn

\addcontentsline{toc}{section}{Appendix} 
\part{Appendix} 
\parttoc 

\clearpage
\section{Theory}

In this section, we prove the theoretical results from Section \ref{sec:theory}. We begin with some definitions. A sequence \(B_1, \ldots, B_T\) of random variables is said to be Markovian if, for any \(t \in [T]\), the variable \(B_t\) is independent of \(B_1, \ldots, B_{t-2}\) given \(B_{t-1}\). Additionally, a sequence \(A_1, \ldots, A_T\) of random variables forms a martingale difference sequence relative to \(B_1, \ldots, B_T\) if the conditional expectation \(E[A_t | B_1, \ldots, B_t]=0\) for every \(t \in [T]\).

\subsection{Lemmas}

The Exponential Gradient (EG) algorithm generates a series of vectors, denoted as \(z_1, \ldots, z_T\), each belonging to \(\mathbb{R}^k\). Let \(\eta > 0\). Initialize \(\tilde{q}_1 = (1, \ldots, 1) \in \mathbb{R}^k\). For \(t \geq 1\), 
\begin{itemize}
    \item Set \(\tilde{q}_{t+1, i} = \tilde{q}_{t,i} \exp(-\eta z_{t,i})\) for each \(i \in [k]\)
    \item Normalize \(\tilde{q}_t\) to obtain \(q_t\) such that \(q_t = \frac{\tilde{q}_t}{\sum_{i=1}^k \tilde{q}_{t,i}}\)
\end{itemize}
The EG algorithm satisfies the following Lemma.
\begin{lemma}[\citet{shalev2012online}, Theorem 2.22]\label{lem:eg}
    Assume that $\eta z_{t,i} \geq -1$ for every $t$ and $i$. Then, for every $u \in \Delta^k$, we have that
    \begin{align*}
        \sum_{t=1}^T (q_t - u)^\top z_t \leq \frac{\log(k)}{\eta} + \eta \sum_{t=1}^T \sum_{i=1}^k q_{t,i} z_{t,i}^2.
    \end{align*}
\end{lemma}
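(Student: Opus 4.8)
The plan is to prove this by the standard relative-entropy potential argument for multiplicative-weights / exponentiated-gradient updates. Fix an arbitrary comparator $u \in \Delta^k$ and define the potential $\Phi_t := D_{\mathrm{KL}}(u \,\|\, q_t) = \sum_{i=1}^k u_i \log(u_i / q_{t,i})$. The strategy is to show that, up to a second-order correction, each round's potential drop $\Phi_t - \Phi_{t+1}$ lower-bounds $\eta$ times the instantaneous regret $(q_t - u)^\top z_t$; summing then telescopes the potential and, after dividing by $\eta$, yields the claimed bound.

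First I would introduce the unnormalized iterates $\tilde q_t$ together with their partition function $W_t := \sum_{i=1}^k \tilde q_{t,i}$, so that $q_{t,i} = \tilde q_{t,i}/W_t$ and the EG update rearranges to $q_{t+1,i} = q_{t,i}\,(W_t/W_{t+1})\exp(-\eta z_{t,i})$. Substituting this into $\Phi_t - \Phi_{t+1} = \sum_i u_i \log(q_{t+1,i}/q_{t,i})$ and using $\sum_i u_i = 1$ gives the exact identity $\Phi_t - \Phi_{t+1} = \log(W_t/W_{t+1}) - \eta\, u^\top z_t$. It therefore remains to lower-bound $\log(W_t/W_{t+1})$, equivalently to upper-bound the one-step ratio $W_{t+1}/W_t = \sum_i q_{t,i}\exp(-\eta z_{t,i})$.

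The key step, and the only place the hypothesis $\eta z_{t,i} \geq -1$ enters, is the elementary quadratic inequality $e^x \leq 1 + x + x^2$ valid for all $x \leq 1$: applying it with $x = -\eta z_{t,i} \leq 1$ and then $1 + y \leq e^y$ gives $W_{t+1}/W_t \leq \exp\!\big(-\eta\, q_t^\top z_t + \eta^2 \sum_i q_{t,i} z_{t,i}^2\big)$, so that $\log(W_t/W_{t+1}) \geq \eta\, q_t^\top z_t - \eta^2 \sum_i q_{t,i} z_{t,i}^2$. Combining with the identity above yields the per-round bound $\Phi_t - \Phi_{t+1} \geq \eta\,(q_t - u)^\top z_t - \eta^2 \sum_i q_{t,i} z_{t,i}^2$. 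Summing over $t = 1,\dots,T$ telescopes the left side to $\Phi_1 - \Phi_{T+1}$; I would then use $\Phi_{T+1} = D_{\mathrm{KL}}(u\,\|\,q_{T+1}) \geq 0$ and the fact that $q_1$ is uniform, so $\Phi_1 = \log k - H(u) \leq \log k$ where $H(u) = -\sum_i u_i \log u_i \geq 0$ is the Shannon entropy. Rearranging and dividing by $\eta$ delivers the stated inequality. The main obstacle is essentially bookkeeping rather than conceptual: verifying $e^x \le 1 + x + x^2$ on $(-\infty,1]$ (a short convexity/monotonicity check) and carefully tracking the normalization constants so that the $\log(W_t/W_{t+1})$ contributions telescope cleanly against the comparator term; once the relative-entropy potential is chosen, no deeper difficulty remains.
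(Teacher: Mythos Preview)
Your proof is correct and follows the standard relative-entropy potential argument for exponentiated gradient. Note, however, that the paper does not supply its own proof of this lemma: it is quoted directly from \citet{shalev2012online} (Theorem~2.22) and used as a black box in the proof of Theorem~\ref{thm:main_result}, so there is no paper proof to compare against. Your argument is essentially the one found in that reference.
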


\begin{lemma}[\citet{hazan2011beating, fan2012hoeffding}]\label{lem:martingale}
    Let $B_1, \ldots, B_T$ be a Markovian sequence and let $A_1, \ldots, A_T$ be a martingale difference sequence wrt $B_1, \ldots, B_T$. Assume that for every $t \in [T]$, we have $|A_t| \leq V$ and $\be[A_t^2 | B_1, \ldots, B_T] \leq s$. Then, for every $\alpha > 0$, we have that
    \begin{align*}
        \bp( \frac{1}{T} \sum_{t=1}^T A_t \geq \alpha) \leq \exp(- T\frac{\alpha^2}{2s + 2\alpha V/3}).
    \end{align*}
    In particular, for every $\delta \in (0,1)$, if 
    \begin{align*}
        T \geq \frac{2(s + \alpha V / 3) \log(1/\delta)}{\alpha^2}
    \end{align*}
    then with probability of at least $1-\delta$ we have that $\frac{1}{T} \sum_{t=1}^T A_t \leq \alpha$.
\end{lemma}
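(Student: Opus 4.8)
The plan is to show that the random iterates $\vtheta_1,\dots,\vtheta_T$ produced by the game form an approximate solution to the minimax problem, by combining three ingredients: the regret guarantee of the $\valpha$-player (Lemma~\ref{lem:eg}), the $C$-low-regret guarantee of the $\vtheta$-player, and a concentration bound (Lemma~\ref{lem:martingale}) that controls the randomness coming from sampling $i_t\sim\valpha_t$. Throughout I write $\ell_i(\vtheta)=\frac{1}{|\sD_i|}\sum_{z\in\sD_i}\sL(\vtheta,z)$, $\bm{\ell}(\vtheta)=(\ell_1(\vtheta),\dots,\ell_k(\vtheta))$, and $V^\star=\min_{\vtheta\in\Theta}\max_{i\in[k]}\ell_i(\vtheta)$. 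The key reduction is the linear-programming identity $\max_{\valpha\in\Delta^k}\sum_i\alpha_i\ell_i(\vtheta)=\max_{i}\ell_i(\vtheta)$, so that the left-hand side of the theorem equals $\frac{1}{T}\max_{u\in\Delta^k}\sum_t u^\top\bm{\ell}(\vtheta_t)$, i.e.\ bounding the max-task loss of the running average reduces to bounding a duality gap of the realized play.

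First I would handle the $\valpha$-player. Since this player maximizes, it runs exponentiated gradient on the gain vectors $z_t=-\bm{\ell}(\vtheta_t)$, and the precondition $\eta z_{t,i}\ge-1$ of Lemma~\ref{lem:eg} holds because $\ell_i\le 1$ and $\eta=\tfrac{1}{4k}\le 1$. Taking the comparator $u$ to be the indicator of the worst task for the running average and bounding the second-order term by $\ell_i^2\le\ell_i\le 1$, so that $\sum_i\alpha_{t,i}\ell_i^2\le\valpha_t^\top\bm{\ell}(\vtheta_t)$, Lemma~\ref{lem:eg} gives
\[
\max_i\sum_{t=1}^T\ell_i(\vtheta_t)\le\sum_{t=1}^T\valpha_t^\top\bm{\ell}(\vtheta_t)+\frac{\log k}{\eta}+\eta\sum_{t=1}^T\valpha_t^\top\bm{\ell}(\vtheta_t).
\]

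Next I would connect the expected per-round losses $\valpha_t^\top\bm{\ell}(\vtheta_t)$ to the realized sampled losses and then invoke the $\vtheta$-player's regret. Conditioned on the history $\mathcal{F}_t$ (under which $\valpha_t$ and $\vtheta_t$ are measurable) we have $\valpha_t^\top\bm{\ell}(\vtheta_t)=\E[\ell_{i_t}(\vtheta_t)\mid\mathcal{F}_t]$, so $A_t=\valpha_t^\top\bm{\ell}(\vtheta_t)-\ell_{i_t}(\vtheta_t)$ is a martingale difference sequence with $|A_t|\le 1$ and conditional second moment at most $1$; Lemma~\ref{lem:martingale} then yields $\sum_t\valpha_t^\top\bm{\ell}(\vtheta_t)\le\sum_t\ell_{i_t}(\vtheta_t)+O(\sqrt{T\log(1/\delta)})$ with probability at least $1-\delta$. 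Applying the $C$-low-regret property to the realized index sequence $i_1,\dots,i_T$ and comparing against the minimax-optimal $\vtheta^\star$ gives $\sum_t\ell_{i_t}(\vtheta_t)\le\min_\vtheta\sum_t\ell_{i_t}(\vtheta)+C\sqrt{T}\le TV^\star+C\sqrt{T}$, where the last inequality uses $\ell_{i_t}(\vtheta^\star)\le\max_i\ell_i(\vtheta^\star)=V^\star$. Chaining the three displays and dividing by $T$ produces
\[
\max_i\frac{1}{T}\sum_{t=1}^T\ell_i(\vtheta_t)\le V^\star+\eta V^\star+\frac{C}{\sqrt{T}}+O\!\left(\sqrt{\frac{\log(1/\delta)}{T}}\right)+\frac{\log k}{\eta T},
\]
and substituting $\eta=\tfrac{1}{4k}$ makes $\frac{\log k}{\eta T}=\frac{4k\log k}{T}\le\frac{\epsilon}{8}$ once $T\ge\frac{32k\log k}{\epsilon}$, while $\frac{C}{\sqrt{T}}\le\frac{\epsilon}{8}$ once $T\ge\frac{64C^2}{\epsilon^2}$; the concentration term is driven below target accuracy by these same lower bounds on $T$ for the fixed $\delta$, and the residual term $\eta V^\star\le\eta$ is controlled by the smallness of the step size. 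Summing the pieces gives the claimed $V^\star+\epsilon$ bound.

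The main obstacle is the coupling between the two no-regret dynamics and the sampling noise: the $\vtheta$-player's guarantee is stated against the \emph{realized} index sequence, whereas the $\valpha$-player's regret and the quantity of interest both involve the \emph{expected} per-round losses $\valpha_t^\top\bm{\ell}(\vtheta_t)$. Bridging the two requires setting up the correct filtration so that $A_t$ is genuinely a martingale difference sequence (in particular, that $\vtheta_t$ and $\valpha_t$ are fixed before $i_t$ is drawn) and verifying the range and conditional-variance conditions needed to apply Lemma~\ref{lem:martingale}. Getting this measurability and the $O(1/\sqrt{T})$ bookkeeping right, rather than any single inequality, is the crux; the remaining steps are standard no-regret and weak-duality manipulations.
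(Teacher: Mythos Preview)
Your proposal does not prove the stated lemma. Lemma~\ref{lem:martingale} is a Bernstein-type concentration inequality for bounded martingale difference sequences; the paper does not prove it but cites it from \citet{hazan2011beating,fan2012hoeffding} as a known tool. A direct proof would bound the conditional moment-generating function $\E[e^{\lambda A_t}\mid B_1,\dots,B_t]$ using the hypotheses $|A_t|\le V$ and $\E[A_t^2\mid\cdot]\le s$, telescope over $t$, apply Markov's inequality, and optimize in $\lambda$. Nothing in your writeup addresses any of this.

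What you have actually sketched is a proof of Theorem~\ref{thm:main_result}, which merely \emph{invokes} Lemma~\ref{lem:martingale} as one step. Even read as such, your argument does not match the paper's: you feed the $\valpha$-player the full loss vector $z_t=-\bm{\ell}(\vtheta_t)$ and apply the martingale lemma with $V=s=1$ to the differences $\valpha_t^\top\bm{\ell}(\vtheta_t)-\ell_{i_t}(\vtheta_t)$. But the AMA-S protocol being analyzed gives the $\valpha$-player only \emph{bandit} feedback (a single sampled index $i_t$ per round), so the paper runs EG on the importance-weighted one-hot vectors $z_t=-\tfrac{L_{i_t}(\vtheta_t)}{\alpha_{t,i_t}}e_{i_t}$ and applies Lemma~\ref{lem:martingale} with $V=2k+1$ and $s=2k$ to pass from these unbiased estimates back to the true $L_i(\vtheta_t)$. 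Your full-information variant would require computing all $k$ task losses each round, which the resampling algorithm does not do; so even setting aside the mismatch with the target statement, the sketch analyzes a different algorithm than the one in the theorem.
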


\subsection{Proof of Theorem \ref{thm:main_result}}

\begin{proof}[Proof of Theorem \ref{thm:main_result}]

To simplify notation, moving forward, we define:
\begin{align*}
    L_i(\vtheta) := \frac{1}{|\sD_{i}|} \sum_{x \in \sD_{i}} \sL(\vtheta, z).
\end{align*}
\textbf{Step 1: Concentration of Probability.} To begin, we note that by definition \( \alpha_{t,i} = \frac{1}{2k} + \frac{q_{t,i}}{2} \), implying that for all $i \in [k]$ and $t \in [T]$,
\begin{align*}
 \frac{q_{t,i}}{\alpha_{t,i}} = \frac{q_{t,i}}{\frac{1}{2k} + \frac{q_{t,i}}{2}} \leq 2   
\end{align*}
and
\begin{align*}
    \frac{1}{\alpha_{t,i}} \leq 2k.
\end{align*}
Fix $i \in [k]$ and let \( A_1, \ldots, A_T \) be a martingale difference sequence where \( A_t = L_i(\vtheta_t) - \langle e_i, \frac{L_{i_t}(\vtheta_t)}{\alpha_{t,{i_t}}} e_{i_t} \rangle \). We will apply Lemma \ref{lem:martingale}. Note that \( |A_t| \leq (2k + 1) \). Furthermore, \( \E[A_t^2 | q_t, \vtheta_t] \leq 2k \) since
\[
\E\left[ \langle e_{i_t} \frac{L_{i_t}(\vtheta_t)}{\alpha_{t,i_t}}, e_i \rangle^2 | q_t, \vtheta_t \right] = \sum_{j=1}^k \frac{\alpha_{t,j}}{\alpha_{t,j}^2} \langle e_j L_j(\vtheta_t), e_i \rangle^2 \leq \frac{1}{\alpha_{t,i}} \leq 2k.
\]
where in the first inequality we used $L_j(\theta_t) \leq 1$ by assumption and in the second inequality we used $\frac{1}{\alpha_{t,i}} \leq 2k$.
  Then by Lemma \ref{lem:martingale} if \( T \geq 6k \log(k/\delta) / (\varepsilon/8)^2 \) we have that with probability at least \( 1 - \delta/k \), \( \frac{1}{T} \sum_{t=1}^T A_t \leq \varepsilon/8 \). Then, by the union bound, we have that with probability at least $1-\delta$
\begin{align}
   \forall i \in [m], \quad \frac{1}{T} \sum_{t=1}^T L_i(\vtheta_t) \leq \frac{1}{T} \sum_{t=1}^T \langle e_i, \frac{L_{i_t}(\vtheta_t)}{\alpha_{t,i}} e_{i_t} \rangle + \varepsilon/8. \label{eq:concentration_probability}
\end{align}
For the rest of the proof, we condition on the above event. \newline

\noindent \textbf{Step 2: the $\valpha$-player.} 
Setting \( z_t = -\frac{L_{i_t}(\vtheta_t)}{\alpha_{t, i_t}} e_{i_t} \), we see that the \( \valpha \)-player applies the EG algorithm wrt \( z_1, \ldots, z_T \). Since $z_{t,i} \geq -2k$ and $\eta \leq \frac{1}{2k}$, Lemma \ref{lem:eg} implies that for every $u \in \Delta^k$,
\begin{align}
    \frac{1}{T} \sum_{t=1}^T \langle q_t - u, z_t \rangle & \leq \frac{\log(k)}{\eta T} + \frac{\eta}{T} \sum_{t=1}^T \sum_{i=1}^k q_{t,i} z_{t,i}^2 \nonumber \\
    & \leq \frac{\log(k)}{\eta T} + \frac{4k\eta}{T} \sum_{t=1}^T  L_{i_t}(\vtheta_t)^2 \nonumber \\
    & \leq \frac{\log(k)}{\eta T} + \frac{4k \eta }{T} \sum_{t=1}^T L_{i_t}(\vtheta_t) \nonumber \\
    & \leq \varepsilon / 8 + \frac{1}{T} \sum_{t=1}^T L_{i_t}(\vtheta_t) \label{eq:alpha_player}
\end{align}
where in the last inequality we used \( \eta = \frac{1}{4k} \) and \( T \geq \frac{32 k \log(k)}{\epsilon} \). \newline

\noindent \textbf{Step 3: the $\vtheta$-player.} By definition of our low regret $\vtheta$-player, we have that for every sequence $i_1, \ldots, i_T$
\begin{align*}
     \frac{1}{T} \sum_{t=1}^T  L_{i_t}(\vtheta_t) - \min_{\vtheta \in \Theta} \frac{1}{T} \sum_{t=1}^T L_{i_t}(\vtheta) & \leq \frac{C }{\sqrt{T}} \\
     & \leq \frac{\epsilon}{8}
\end{align*}
where we used $T \geq \frac{64 C^2}{\epsilon^2}$. Then, rearranging and using the fact that the average is always less than or equal to the max, we have:
\begin{align}
    \frac{1}{T} \sum_{t=1}^T  L_{i_t}(\vtheta_t) & \leq \min_{\vtheta \in \Theta} \frac{1}{T} \sum_{t=1}^T L_{i_t}(\vtheta) +  \frac{\epsilon}{8} \nonumber \\
    & \leq \min_{\vtheta \in \Theta} \max_{i \in [k]} L_{i}(\vtheta) + \frac{\epsilon}{8}. \label{eq:w_player}
\end{align}

\noindent\textbf{Step 4: Wrapping it up.} Putting it all together, we have that for all $i \in [k]$,
\begin{align*}
    \frac{1}{T} \sum_{t=1}^T L_i(\vtheta_t) & \leq \frac{1}{T} \sum_{t=1}^T \langle e_i, \frac{1}{\alpha_{t,i}} L_i(\vtheta_t) e_i \rangle + \varepsilon/8 \\
    & \leq \frac{1}{T} \sum_{t=1}^T L_{i_t}(\vtheta_t) + \varepsilon / 4  \\
    & \leq \min_{\vtheta \in \Theta} \max_{i \in [k]} L_{i}(\vtheta) + \varepsilon
\end{align*}
where the first inequality follows from \eqref{eq:concentration_probability}, the second inequality follows from \eqref{eq:alpha_player} with $u=e_i$, and the last inequality follows from \eqref{eq:w_player}. 

\end{proof}

\subsection{Additional Results}

\begin{proof}[Proof of Example \ref{ex:ogd}]
    \citep{shalev2012online} implies that online gradient descent with learning rate $\eta = \frac{1}{L \sqrt{2T}}$ is $BL \sqrt{2}$-low-regret where $L$ is the Lipschitz constant of $l(f_w(x), y)$ and $\norm{w}_2 \leq B$. By assumption, $B=1$. To bound on the Lipschitz constant of $l(f_w(x), y)$ wrt $w$ over $\cw$, it suffices to bound the euclidean norm of the gradient wrt $w$. Therefore,
    \begin{align*}
        \norm{\nabla_w l(f_w(x), y)}_2 & =  \norm{2|w \cdot x - y| x}_2 \\
        & \leq 2 |w \cdot x - y| \norm{ x}_2 \\
        & \leq  2 |w \cdot x - y| \\
        & \leq 2 ( \norm{w}_2 \norm{x}_2 + |y|) \\
        & \leq 4,
    \end{align*}
    which yields a bound of $4$, completing the proof.
\end{proof}

\section{Algorithms}
\label{app:algorithms}

\begin{algorithm}[t]
\caption{AutoMixAlign Reweighting (AMA-R)}
\label{alg:ama_r}
\begin{algorithmic}[1]
\STATE \textbf{Inputs:} Task datasets $\sD_1, \dots, \sD_k$, specialist parameters $\vtheta_1, \dots, \vtheta_k$, batch size $b$, smoothing parameter $c\in(0,1)$, training steps $T$
\STATE \textbf{Outputs:} Trained model parameters $\vtheta$
\STATE For all tasks, compute $\sL(\vtheta_i, z)$ for each $z \in \sD_i$, and add them to a new column in $\sD_i$ \label{line:precompute}
\STATE Initialize model parameters $\vtheta$
\STATE $\bm{q}^{(1)} \gets (\frac{1}{k},\dots,\frac{1}{k})$
\FOR{$t = 1, \dots, T$}
    \STATE $\alpha^{(t)}_i \gets (1-c)q^{(t)}_i + c\frac{1}{k}, \forall i$
    \STATE $\sT \sim \text{Uniform}(1, \dots, 
    k, b)$
    \STATE $\sB \gets \{z \sim \sD_j : \forall j \in \sT\}$
    \STATE $\sB_i \gets \sB \cap \sD_i$ for each task $i$ 
    \STATE Update $\bm{q}^{(t)}$ using exponentiated gradient ascent:
    $$
    {q}^{(t+1)}_i \gets {q}^{(t)}_i \cdot \exp\left\{ \frac{1}{|\sB_i|} \sum_{z\in\sB_i}\max\{ \mathcal L(\vtheta, z)-\mathcal L(\vtheta_i, z), 0\}\right\}
    $$
    $$
    {q}^{(t+1)}_i \gets \frac{{q}^{(t+1)}_i}{\sum_{j=1}^k {q}^{(t+1)}_j}
    $$
    \STATE Perform one step of gradient descent on the loss 
    $$
    \sum_{i=1}^k \alpha_i\frac{1}{|\sB_i|}\sum_{z\in\sB_i} \mathcal \max\{ \sL(\vtheta, z)-\mathcal \sL(\vtheta_i, z), 0\}
    $$

\ENDFOR
\end{algorithmic}
\end{algorithm}

\begin{algorithm}[t]
\caption{Standard Optimization (Standard)}
\label{alg:standard}
\begin{algorithmic}[1]
\STATE \textbf{Inputs:} Task datasets $\sD_1, \dots, \sD_k$, batch size $b$, training steps $T$
\STATE \textbf{Outputs:} Trained model parameters $\vtheta$
\STATE Initialize model parameters $\vtheta$
\FOR{$t = 1, \dots, T$}
    \STATE Sample minibatch $\sB$ containing $b$ samples drawn uniformly at random from $\bigcup_{i=1}^k \sD_i$
    \STATE Perform one step of gradient descent on the loss 
    $
    \frac{1}{|\sB|} \sum_{z\in\sB} \sL(\vtheta, z)
    $
\ENDFOR
\end{algorithmic}
\end{algorithm}

\begin{algorithm}[t]
\caption{Standard Optimization with Uniform Task Sampling (Standard Uniform)}
\label{alg:standard_uniform}
\begin{algorithmic}[1]
\STATE \textbf{Inputs:} Task datasets $\sD_1, \dots, \sD_k$, batch size $b$, training steps $T$
\STATE \textbf{Outputs:} Trained model parameters $\vtheta$
\STATE Initialize model parameters $\vtheta$
\FOR{$t = 1, \dots, T$}
    \STATE $\sT \sim \text{Uniform}(1,\dots,k, b)$
    \STATE $\sB \gets \{z \sim \sD_j : \forall j \in \sT\}$
    \STATE Perform one step of gradient descent on the loss 
    $
    \frac{1}{|\sB|} \sum_{z\in\sB} \sL(\vtheta, z)
    $
\ENDFOR
\end{algorithmic}
\end{algorithm}



\clearpage



\clearpage

\begin{table*}[h]
    \centering
    \resizebox{\textwidth}{!}{
    \begin{tabular}{l|c|c|c|c|c|c}
    \hline
    \rowcolor[HTML]{EFEFEF} 
    \textbf{Model} & \textbf{IFEval (\%)}   $\uparrow$ & \textbf{Alpaca Eval (\%)}   $\uparrow$ & \textbf{MBPP (\%)}   $\uparrow$ & \textbf{HumanEval (\%)}   $\uparrow$ & \textbf{Toxigen (\%)}   $\uparrow$ & \textbf{Average (\%)}   $\uparrow$\\ \hline
    Zephyr-7b-sft-full & 35.30 & 8.64  & 49.90 & 50.46 & 51.60  & 39.18 \\ \hline\hline
    Standard              & 38.63 & 11.32 & 50.00 & \textbf{54.88} & 99.91  & 50.95 \\ 
    AMA-S         & 43.62 & 13.04 & \textbf{55.76} & 53.66 & 99.80  & \textbf{53.18} \\ \hline
    AMA-S Uniform                           & 42.51 &  \textbf{13.90} & 52.16 & 51.83 & 99.80  & 52.05 \\ 
    AMA-S DoReMi     & \textbf{44.81} &  10.97 & 45.32 & 53.05 & \textbf{99.99}  & 50.70 \\ 
    AMA-S Unclipped                    & 42.51 &  12.48 & 48.56 & 53.66 & 99.84  & 51.42 \\ 
    \hline
    \end{tabular}
    }
    \caption{AMA-S ablations for Setup 3, Helpfulness (Chatbot Arena 2024) + Coding (CodeUltraFeedback) + Harmlessness (SafeRLHF).}
    \label{tab:ama_s_ablation}
\end{table*}

\begin{table*}[h]
    \centering
    \resizebox{\textwidth}{!}{
    \begin{tabular}{l|c|c|c|c|c|c}
    \hline
    \rowcolor[HTML]{EFEFEF} 
    \textbf{Model} & \textbf{IFEval (\%)}   $\uparrow$ & \textbf{Alpaca Eval (\%)}   $\uparrow$ & \textbf{MBPP (\%)}   $\uparrow$ & \textbf{HumanEval (\%)}   $\uparrow$ & \textbf{Toxigen (\%)}   $\uparrow$ & \textbf{Average (\%)}   $\uparrow$\\ \hline
    Zephyr-7b-sft-full & 35.30 & 8.64  & 49.90 & 50.46 & 51.60  & 39.18 \\ \hline\hline
    Standard        & 38.63 & 11.32 & 50.00 & 54.88 & \textbf{99.91}  & 50.95 \\ 
    AMA-R           & 48.43 & 17.77 & 54.32 & \textbf{55.49} & 95.91  & \textbf{54.38} \\\hline
    AMA-R Uniform   & 45.47 & 9.69 & \textbf{54.68} & 53.66 & 99.54  & 52.61 \\ 
    AMA-R DoReMi    & \textbf{49.35} & 14.02 & 53.95 & 53.66 & 99.47 & 54.09\\
    AMA-R Unclipped & 47.50 & \textbf{18.56} & 48.56 & 51.83 & 96.59 & 52.61 \\ \hline
    \end{tabular}
    }
    \caption{AMA-R ablations for Setup 3, Helpfulness (Chatbot Arena 2024) + Coding (CodeUltraFeedback) + Harmlessness (SafeRLHF).}
    \label{tab:ama_r_ablation}
\end{table*}

\section{Additional Experiments}
\label{app:ablations}

In this appendix, we first present AMA ablations to understand how adaptive task weighting and excess loss clipping affect performance. 
Next, since our model merging baseline in our main experiments computes a uniform average of specialist parameters, we present ablations on model merging with non-uniform parameter averaging.
We focus on Setup 3 (Helpfulness + Coding + Harmlessness).
\subsection{AMA Ablations}

In this appendix, we run additional ablation experiments to better understand how adaptive task weighting and excess loss clipping affect AMA-R and AMA-S performance. 
We consider the following ablations
\begin{enumerate}
    \item \textbf{AMA-R/S Uniform} (Algorithm~\ref{alg:ama_s_uniform}). Fix task weights at $1/k$ throughout training.
    \item \textbf{AMA-R/S DoReMi} (Algorithm~\ref{alg:ama_s_doremi}). Fix task weights at the average weights obtained throughout AMA-S training, as is done in DoReMi.~\citep{xie2024doremi}.\footnote{We use this baselines to determine if a fixed, non-uniform task weighting used in DoReMi is superior to a fixed, uniform weighting. We note that DoReMi determines task weights by optimizing the \textit{unclipped} excess loss, whereas we are optimizing the clipped excess loss. Despite this difference, this baseline captures the spirit of DoReMi.} 
    \item \textbf{AMA-R/S  Unclipped} (Algorithm~\ref{alg:ama_s_unclipped}). Remove the inner $\max$ in Eq.~\ref{eq:ama} so that the excess loss is no longer clipped at 0 for model updates.
\end{enumerate}
Note that Algorithms~\ref{alg:ama_s_uniform},~\ref{alg:ama_s_doremi}, and~\ref{alg:ama_s_unclipped} follow AMA-S. We omit pseudocode for AMA-R ablations because they are analogous to the pseudocode for AMA-S ablations.

We present AMA-S ablation results in Table~\ref{tab:ama_s_ablation} and AMA-R ablation results in Table~\ref{tab:ama_r_ablation}
All AMA-S ablations yield lower average performance than AMA-S, and all AMA-R ablations yield lower average performance than AMA-R, indicating that fixed weights lead to suboptimal task balancing and that excess loss clipping is critical for good performance.

AMA-S Uniform scores higher on average than Standard whereas AMA-S DoReMi and Standard have similar average scores. 
This result emphasizes that while there exist fixed task weights that can improve over Standard (uniform weights in this setup), other fixed task weights may not yield improvements.
Since it is not obvious how to determine the ideal fixed weighting, adaptive weighting is critical.

AMA-R Unclipped and AMA-S Unclipped degrade performance on MBPP w.r.t.\@ the base model.
Without clipping, the generalist model optimizes beyond the losses achieved by specialist models.
Since specialist model losses correspond to strong performance, further optimization causes our generalist model to overfit to CodeUltraFeedback.

We now show this phenomenon empirically.
In Figure~\ref{fig:codeuf_excess_loss}, we show the (unclipped) excess loss on CodeUltraFeedback during AMA-S Unclipped training. 
After 0.5 epochs of training, AMA-S Unclipped achieves negative excess loss, indicating that it has optimized beyond the specialist model losses.
Since we save model checkpoints every 0.5 epochs (as described in Appendix~\ref{app:eval}), \textit{all} model checkpoints were optimized beyond the specialist model.

\begin{figure}
    \centering
    \includegraphics[width=0.5\linewidth]{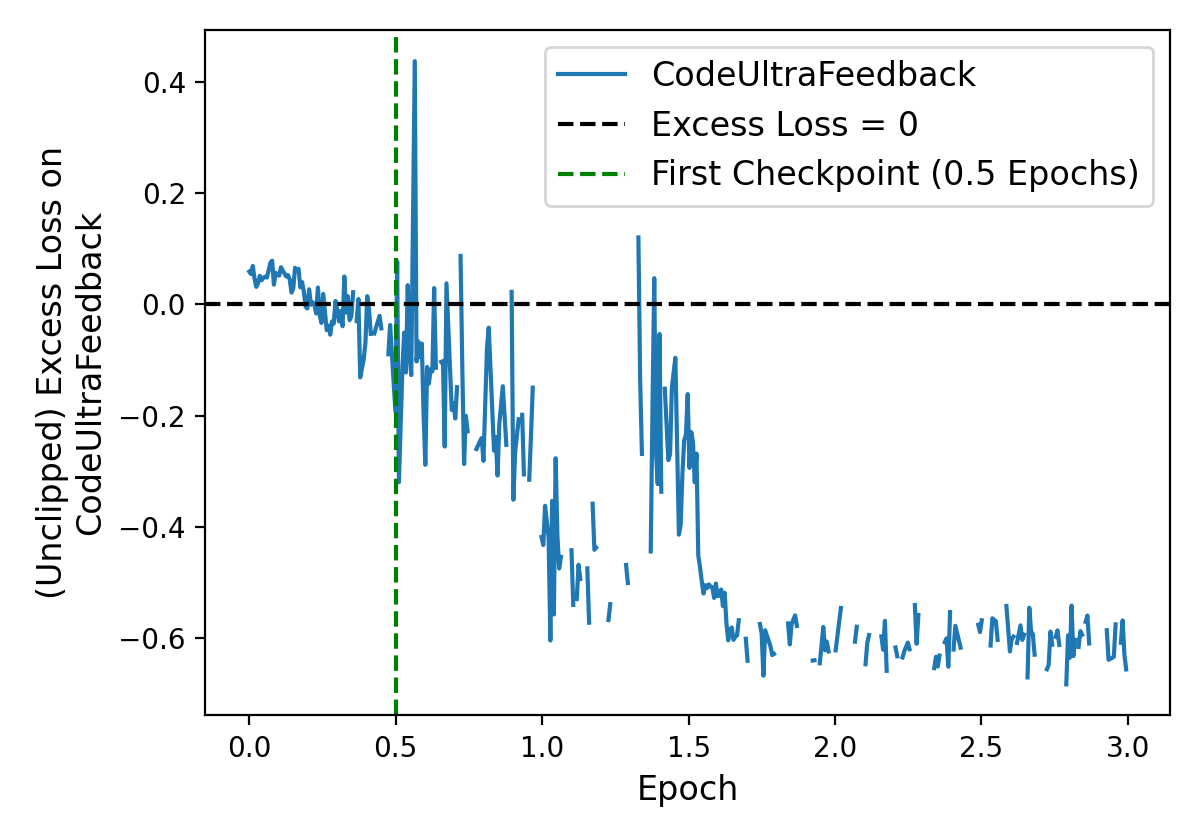}
    \caption{Excess loss on CodeUltraFeedback during AMA-S Unclipped training in Setup 3. We save model checkpoints every 0.5 epochs, and the green dashed line indicates where we save our first model checkpoint. The horizontal dashed line indicates zero excess loss. After 0.5 epochs of training, AMA-S Unclipped achieves negative excess loss, indicating that it has optimized beyond the specialist model losses. The gaps in the excess loss curve arise because in some updates, no CodeUltraFeedback samples were sampled. }
    \label{fig:codeuf_excess_loss}
\end{figure}

\begin{algorithm}[h]
\caption{AutoMixAlign Resampling Uniform (AMA-S Uniform)}
\label{alg:ama_s_uniform}
\begin{algorithmic}[1]
\STATE \textbf{Inputs:} Task datasets $\sD_1, \dots, \sD_k$, specialist parameters $\vtheta_1, \dots, \vtheta_k$, batch size $b$, training steps $T$
\STATE \textbf{Outputs:} Trained model parameters $\vtheta$
\STATE For all tasks, compute $\sL(\vtheta_i, z)$ for each $z \in \sD_i$, and add them to a new column in $\sD_i$ \label{line:precompute}
\STATE Initialize model parameters $\vtheta$
\FOR{$t = 1, \dots, T$}
    \STATE $\sT \sim \text{Uniform}(1,\dots,k, b)$
    \STATE $\sB \gets \{z \sim \sD_j : \forall j \in \sT\}$
    \STATE Update $\vtheta$ using one step of gradient descent on the loss 
    $$
    \frac{1}{|\sB|} \sum_{z\in\sB} \mathcal \max\{\sL(\vtheta, z)-\mathcal \sL(\vtheta_i, z), 0\}
    $$
    
\ENDFOR
\end{algorithmic}
\end{algorithm}

\begin{algorithm}[h]
\caption{DoReMi}
\label{alg:ama_s_doremi}
\begin{algorithmic}[1]
\STATE \textbf{Inputs:} Task datasets $\sD_1, \dots, \sD_k$, specialist parameters $\vtheta_1, \dots, \vtheta_k$, batch size $b$, training steps $T$
\STATE \textbf{Outputs:} Trained model parameters $\vtheta$
\STATE For all tasks, compute $\sL(\vtheta_i, z)$ for each $z \in \sD_i$, and add them to a new column in $\sD_i$ \label{line:precompute}
\STATE Initialize model parameters $\vtheta$
\STATE Initialize $\valpha$ to be the average task weighting achieved during a training run of AMA (Algorithm~\ref{alg:ama}).
\FOR{$t = 1, \dots, T$}
    \STATE $\sT \sim \text{Multinomial}(\alpha_1,\dots,\alpha_k, b)$
    \STATE $\sB \gets \{z \sim \sD_j : \forall j \in \sT\}$
    \STATE Update $\vtheta$ using one step of gradient descent on the loss 
    $$
    \frac{1}{|\sB|} \sum_{z\in\sB} \mathcal \max\{\sL(\vtheta, z)-\mathcal \sL(\vtheta_i, z), 0\}
    $$
    
\ENDFOR
\end{algorithmic}
\end{algorithm}

\begin{algorithm}[h]
\caption{AMA-S Unclipped}
\label{alg:ama_s_unclipped}
\begin{algorithmic}[1]
\STATE \textbf{Inputs:} Task datasets $\sD_1, \dots, \sD_k$, specialist parameters $\vtheta_1, \dots, \vtheta_k$, batch size $b$, smoothing parameter $c\in (0,1)$, training steps $T$
\STATE \textbf{Outputs:} Trained model parameters $\vtheta$
\STATE For all tasks, compute $\sL(\vtheta_i, z)$ for each $z \in \sD_i$, and add them to a new column in $\sD_i$ \label{line:precompute}
\STATE Initialize model parameters $\vtheta$
\STATE $\bm{q}^{(1)} \gets (\frac{1}{k},\dots,\frac{1}{k})$
\FOR{$t = 1, \dots, T$}
    \STATE $\alpha^{(t)}_i \gets (1-c)q^{(t)}_i + c\frac{1}{k}, \forall i$
    \STATE $\sT \sim \text{Multinomial}(\alpha^{(t)}_1, \dots, 
    \alpha^{(t)}_k, b)$
    \STATE $\sB \gets \{z \sim \sD_j : \forall j \in \sT\}$
    \STATE $\sB_i \gets \sB \cap \sD_i$ for each task $i$ 
    \STATE Update $\valpha$ using one step of EXP3 ~\cite{auer2002finite}:
    $$
    q^{(t+1)}_i \gets q^{(t)}_i \cdot \exp\left\{ \frac{1}{q^{(t)}_i}\frac{1}{|\sB_i|} \sum_{z\in\sB_i} \max\{\mathcal L(\vtheta, z)-\mathcal L(\vtheta_i, z), 0\}\right\}
    $$
    $$
    q^{(t+1)}_i \gets \frac{q^{(t+1)}_i}{\sum_{j=1}^k q^{(t+1)}_j}
    $$
    \STATE Update $\vtheta$ using one step of gradient descent on the loss 
    $$
    \frac{1}{|\sB|} \sum_{z\in\sB} \sL(\vtheta, z)-\mathcal \sL(\vtheta_i, z)
    $$
    
\ENDFOR
\end{algorithmic}
\end{algorithm}

\clearpage
\begin{table}[h]
\centering
\resizebox{\textwidth}{!}{
\begin{tabular}{l|c|c|c|c|c}
\hline
\rowcolor[HTML]{EFEFEF}
\textbf{Model} & \textbf{IFEval (\%)}   $\uparrow$ & \textbf{Alpaca Eval (\%)}   $\uparrow$ & \textbf{MBPP (\%)}   $\uparrow$ & \textbf{HumanEval (\%)} $\uparrow$ & \textbf{Average (\%)}   $\uparrow$\\ \hline
    Zephyr-7b-sft-full & 35.30 &  8.64 & 49.90 & 50.46 & 36.08 \\ \hline
    Chatbot Arena 2024 Specialist & 43.44 & 16.05 & 44.60 & 50.61 & 38.68 \\ 
    CodeUltraFeedback  Specialist & 35.86 &  9.93 & 52.16 & 57.32 & 38.82 \\ \hline\hline
 $0.25\vtheta_\text{Chatbot Arena 2024} + 0.75\vtheta_\text{CodeUltraFeedback}$   & 37.15 & 8.48 & 50.36 & 54.88 & 37.72  \\
 $0.50\vtheta_\text{Chatbot Arena 2024} + 0.50\vtheta_\text{CodeUltraFeedback}$   & 42.14 & 11.51 & 48.56 & \textbf{55.49} & 39.43 \\ 
 $0.75\vtheta_\text{Chatbot Arena 2024} + 0.25\vtheta_\text{CodeUltraFeedback}$    & 44.73 & 17.56 & 48.20 & 51.22  &  40.43 \\
\hline
    AMA-R        & 42.51 & \textbf{18.15} & \textbf{51.44} & 54.88 & \textbf{41.75} \\
    AMA-S        & \textbf{43.44} & 15.61 & 51.08 & 50.61 & 40.19 \\ \hline
\end{tabular}}
\caption{Weighted Model Averaging results for Setup 2, Chatbot Arena 2024 (Helpfulness) + CodeUltraFeedback (Coding)}
\label{tab:model_merging_results_1}
\end{table}

\begin{table}[h]
\centering
\resizebox{\textwidth}{!}{
\begin{tabular}{l|c|c|c|c}
\hline
\rowcolor[HTML]{EFEFEF}
\textbf{Model } & \textbf{IFEval (\%)}   $\uparrow$ & \textbf{Alpaca Eval (\%)}   $\uparrow$ & \textbf{Toxigen (\%)}   $\uparrow$ & \textbf{Average (\%)}   $\uparrow$\\
\hline
    Zephyr-7b-sft-full     & 35.30 & 8.64  & 51.60  & 31.85  \\ \hline
    UltraFeedback Specialist          & 42.51 & 17.59 & 36.10  & 32.07  \\ 
    SafeRLHF Specialist        & 29.02 & 6.66  & 100.00 & 45.23  \\ \hline\hline
 $0.25\vtheta_\text{UltraFeedback} + 0.75\vtheta_\text{SafeRLHF}$   & 36.78 & 8.48 & \textbf{99.99} & 48.42 \\
 $0.50\vtheta_\text{UltraFeedback} + 0.50\vtheta_\text{SafeRLHF}$   & 39.37 & 12.21 & 99.96  & 50.51  \\ 
 $0.75\vtheta_\text{UltraFeedback} + 0.25\vtheta_\text{SafeRLHF}$    & 43.99 & 15.69 & 78.17 & 45.95     \\
\hline
    AMA-R         & \textbf{44.55} &  16.21 & 99.74  & 53.50  \\
    AMA-S              & \textbf{44.55} & \textbf{16.88} & \textbf{99.99}  & \textbf{53.81}  \\ \hline
\end{tabular}}
\caption{Weighted Model Averaging results for Setup 2, UltraFeedback (Helpfulness) + SafeRLHF (Harmlessness)}
\label{tab:model_merging_results_2}
\end{table}

\begin{table}[h!]
\centering
\resizebox{\textwidth}{!}{
\begin{tabular}{l|c|c|c|c|c|c}
\hline
\rowcolor[HTML]{EFEFEF} 
\textbf{Model} & \textbf{IFEval (\%)}   $\uparrow$ & \textbf{Alpaca Eval (\%)}   $\uparrow$ & \textbf{MBPP (\%)}   $\uparrow$ & \textbf{HumanEval (\%)}   $\uparrow$ & \textbf{Toxigen (\%)}   $\uparrow$ & \textbf{Average (\%)}   $\uparrow$\\ \hline
Zephyr-7b-sft-full & 35.30 & 8.64  & 49.90 & 50.46 & 51.60  & 39.18 \\ \hline
Chatbot Arena 2024 Specialist & 43.44 & 16.05 & 44.60 & 50.61 & 69.77  & 44.89 \\ 
CodeUltraFeedback Specialist & 35.86 & 9.93  & 52.16 & 57.32 & 76.31  & 46.31 \\ 
SafeRLHF Specialist & 29.02 & 6.66  & 48.56 & 50.61 & 100.00 & 46.97 \\ \hline\hline
 $\frac{1}{6}\vtheta_\text{Chatbot Arena 2024} + \frac{5}{12}(\vtheta_\text{CodeUltraFeedback} + \vtheta_\text{SafeRLHF})$   & 35.86 &  7.58 & 50.00 & 54.27 & 96.50 & 48.84  \\
 $\frac{1}{3}\vtheta_\text{Chatbot Arena 2024} + \frac{1}{3}(\vtheta_\text{CodeUltraFeedback} + \vtheta_\text{SafeRLHF})$    & 42.70 & 12.28 & 52.16 & 53.66 & 96.70  & 51.50 \\ 
$\frac{5}{12}\vtheta_\text{Chatbot Arena 2024} + \frac{1}{6}(\vtheta_\text{CodeUltraFeedback} + \vtheta_\text{SafeRLHF})$  & 43.81 & 17.56 & 48.20 & 51.22 & 70.51 & 46.26 \\
\hline
AMA-R         & \textbf{48.43} & \textbf{17.77} & 54.32 & \textbf{55.49} & 95.91  & \textbf{54.38} \\

AMA-S        & 43.62 & 13.04 & \textbf{55.76} & 53.66 & \textbf{99.80}  & 53.18 \\ \hline
\end{tabular}}
\caption{Weighted Model Averaging results for Setup 3, Chatbot Arena 2024 (Helpfulness) + CodeUltraFeedback (Coding) + SafeRLHF (Harmlessness)}
\label{tab:model_merging_results_3}
\end{table}

\subsection{Model Merging Ablations}

The model merging baseline included in our main experiments uniformly averages the parameters of specialist models. To ablate the effects of parameter weighting with model merging, we investigate non-uniform merging methods.
For the setups 1 and 2, we merge specialist model parameters as
$$\vtheta_{\text{merged}} = \lambda \vtheta_{\text{specialist 1}} + (1-\lambda) \vtheta_{\text{specialist 2}}$$
for $\lambda \in \{0.25, 0.5, 0.75\}$. 
For setup 3, we use 
$$\vtheta_\text{merged} = \lambda\vtheta_{\text{base}} + \frac{(1-\lambda)}{2}(\vtheta_{\text{specialist 1}} + \vtheta_{\text{specialist 2}})$$ 
for $\lambda \in \{1/6, 1/3, 5/12\}$ with Chatbot Arena specialist model as the base model $\vtheta_\text{base}$.

We present results in Tables~\ref{tab:model_merging_results_1},~\ref{tab:model_merging_results_2}, and~\ref{tab:model_merging_results_3}.
In setups 2 and 3, both AMA algorithms score higher on average than all model merging methods. In setup 1, AMA-R has the highest average score, though $0.75\vtheta_\text{Chatbot Arena 2024} + 0.25\vtheta_\text{CodeUltraFeedback}$ has a higher average score than AMA-S. However, AMA-S improves performance on MBPP whereas this merging method degrades performance on MBPP.

\section{Comparing AMA with Existing DRO Algorithms}
\label{app:ama_vs_doremi}

In this section, we discuss differences between AMA and several closely-related DRO-style algorithms: GroupDRO~\cite{sagawa2019distributionally},  ODM~\cite{albalak2023efficient}, Sheared LLama~\cite{xia2023sheared}, ExcessMTL~\cite{he2024robust}, and DoReMi~\cite{xie2024doremi}.
GroupDRO minimizes the worst-case loss across multiple tasks (\textit{i.e.}, groups) by reweighting the loss function to put more weight on tasks with larger loss.
DoReMi and ExcessMTL are DRO-based pretraining algorithms that reweight tasks by prioritizing those with the largest clipped excess loss (while optimizing the usual loss).
ODM and Sheared Llama are online data mixing algorithms that increases the probability of sampling data from tasks with large loss or excess loss.
AMA is similar to DoReMi and GroupDRO because all three introduce a minimax optimization problem used to optimize the worst-case loss or excess loss. Sheared LLama and ODM are similar to AMA in that they are adapt the probability of sampling data from each task during training.
However, AMA differs in several respects:
\begin{enumerate}
    \item AMA focuses on preference alignment, whereas all of these prior works focus on pretraining. GroupDRO additionally considers NLP and vision tasks.
    \item AMA optimizes the clipped excess loss for both task weight and model updates. DoReMi and Sheared LLama optimize the usual loss for model updates. GroupDRO and ODM optimize the usual loss for both task weight and model updates.
    \item AMA \textit{adaptively} updates task weights \textit{during} model optimization, whereas DoReMi learns a \textit{fixed} task weighting prior to model optimization. 
    \item AMA uses single-task specialist models to compute excess losses. DoReMi uses a single multitask model (called the \textit{proxy model}) to compute excess losses. ExcessMTL uses a diagonal approximation of the generalist's Hessian. Sheared LLaMA approximates these losses with single aggregate reference loss computed as the average loss computed over a validation dataset: $\sL(\sD_i^\text{val}, \vtheta) = \frac{1}{|\sD_i^\text{val}|}\sum_{x\in\sD_i^\text{val}}\sL(x,\vtheta)$. Thus, Sheared LLaMA will underestimate the excess loss of easy samples and overestimate the excess loss of hard samples. AMA computes references loss for each sample individually and thus more accurately estimates excess losses.
    \item We introduce two AMA algorithms---one that uses reweighting (AMA-R) and another that uses resampling (AMA-S)---whereas GroupDRO and DoReMi only introduce a reweighting algorithm. We stress that while \citet{sagawa2019distributionally} provide a convergence result for GroupDRO, their proof only applies to the reweighting-based AMA-R algorithm and does not extend to our resampling-based AMA-S algorithm. A primary contribution of this work is that we provide a new theoretical argument to prove convergence for AMA-S.
\end{enumerate}

\section{Excess Loss}
\label{app:excess_loss}
In this section, we provide background motivation for the using the excess loss in AMA. The following discussion is largely summarized from \citet{he2024robust}.

Consider a supervised learning setting where we want to train a model $\vtheta \in \Theta$ to predict the label $y \in \sY$ given input data $x\in\sX$ where samples $(x,y)$ are drawn from some distribution $\sP$. 
%
%
Given loss function $\sL$, the risk of $\vtheta$ is
\begin{equation}
        \sL(\vtheta) = \mathbb{E}_{(x,y)\sim \sD}[\sL(\vtheta, x, y)]. \\
\end{equation}
The risk can be decomposed as
\begin{equation}
\sL(\vtheta) =  \underbrace{\underbrace{\sL(\vtheta) - \sL(\vtheta^*_\Theta)}_{\text{Estimation error}} + 
\underbrace{\sL(\vtheta^*_\Theta) - \sL(\vtheta^*)}_{\text{Approximation error}}}_{\text{Excess risk }} + 
\underbrace{\sL(\vtheta^*)}_{\text{Bayes error}},
\end{equation}
where $\vtheta^*_\Theta = \argmin_{\vtheta\in\Theta} \mathbb{E}_{(x,y)\sim \sP}[\sL(\vtheta, x, y)]$ is the optimal model in model family $\Theta$ and $\vtheta^* = \argmin_{\vtheta} \mathbb{E}_{(x,y)\sim \sP}[\sL(\vtheta, x, y)]$ is the optimal model in any model family. 
Approximation error arises from our choice of model family, but when working with expressive function approximators like LLMs, the approximation error is generally very small. 
The Bayes error arises from label noise and is irreducible. 
The \textit{excess risk} $\sL(\vtheta) - \sL(\vtheta^*)$ denotes the gap between our model's loss and the loss of the Bayes optimal model (\textit{i.e.} the minimum loss achievable). 
Since excess loss removes the loss contribution of label noise and approximation error is effectively 0, it can be interpreted as our model's ``room for improvement.''

The \textit{excess loss} $\sL(\vtheta,x,y) - \sL(\vtheta^*, x,y)$ quantifies the room for improvement on a particular sample.
Samples with high excess loss are learnable ($\sL(\vtheta^*, x,y)$ is small) but our model has not yet learned them. 
Samples with low excess loss are high entropy samples that are less learnable ($\sL(\vtheta^*,x,y)$ is large) or samples that our model has already learned ($\sL(\vtheta, x,y) \approx \sL(\vtheta^*, x,y)$). 

The excess loss is particularly useful in minimax optimization problems where we want to minimize the worst-case loss over different tasks, as is the case with AMA. 
Let $\sL_i(\vtheta) = \frac{1}{|\sD_i|}\sum_{(x,y)\in\sD_i}\sL(\vtheta,x,y)$ denote the loss for task $i$, and consider the following optimization problem:
\begin{equation}
    \label{eq:minimax_loss}
    \min_\vtheta \max_{i\in[k]}\sL_i(\vtheta)
\end{equation}
Suppose the task $i$ has zero excess loss but has a larger loss than task $j$. Such a scenario can arise if task $i$ is difficult to learn.
In this case, optimization problem~\ref{eq:minimax_loss} will focus on optimizing task $i$ even though it cannot minimize its loss any further and make no progress on task $j$.
Minimizing worst-case excess loss prevents optimization from focusing on difficult tasks at the expense of easier tasks.

\section{Future Directions}
\label{app:future_directions}

\noindent \textbf{Partitioning data into tasks.} We relied on intuitive notions of what a ``task'' represents, \textit{e.g.} datasets designed to improve helpfulness fall under the helpfulness task.
However, this intuitive approach is not necessarily optimal, and future work should study how to partition data into tasks to maximize performance in DRO-based algorithms.
For instance, one could automatically partition data into tasks based on classification from another LLM.  \newline

\noindent \textbf{Applying the methodology to other layers of the post-training stack.} We mainly focused on the DPO layer of the post-training learning layer. In principle, our AMA methodology can be applied to others, including reward modeling, supervised-fine-tuning and on-policy learning algorithms such as PPO. We are excited to see what benefit this methodology can bring to these other layers.

\begin{table}[]
    \centering
    \begin{tabular}{l|l}
    \hline     \rowcolor[HTML]{EFEFEF} 
        Hyperparameter & Value \\
        \hline
        Learning rate & $5\cdot 10^{-7}$ \\
        Batch size & 256 \\
        Per-device batch size & 8 \\
        Gradient accumulation steps & 4 \\
        Optimizer & Adam with $\beta_1=0.9,\beta_2 = 0.999, \varepsilon = 10^{-8}$\\
        Learning rate scheduler & cosine \\
        Learning rate scheduler warmup ratio & 0.1 \\
        Number of training epochs & 3 \\
        \hline
        AMA task weight learning rate $\eta$ & 
        1 \\
        AMA smoothing parameter $c$ & 0.1 \\
        \hline
    \end{tabular}
    \caption{Hyperparameters fixed across our main experiments.}
    \label{tab:hyperparameters}
\end{table}

\begin{table}[]
    \centering
    \resizebox{\textwidth}{!}{
    \begin{tabular}{l|l}
    \hline     \rowcolor[HTML]{EFEFEF} 
      Dataset Name & Link \\\hline
      UltraFeedback   & \url{https://huggingface.co/datasets/HuggingFaceH4/ultrafeedback_binarized} \\
      Chatbot Arena 2024   & \url{https://huggingface.co/datasets/lmsys/chatbot\_arena\_conversations}\\
      CodeUltraFeedback & \url{https://huggingface.co/datasets/coseal/CodeUltraFeedback}\\
      PKU-SafeRLHF & \url{https://huggingface.co/datasets/PKU-Alignment/PKU-SafeRLHF}\\\hline
    \end{tabular}
    }
    \caption{Datasets used in our experiments.}
    \label{tab:datasets}
\end{table}

\begin{figure}[h]
    \centering
    \includegraphics[width=0.7\linewidth]{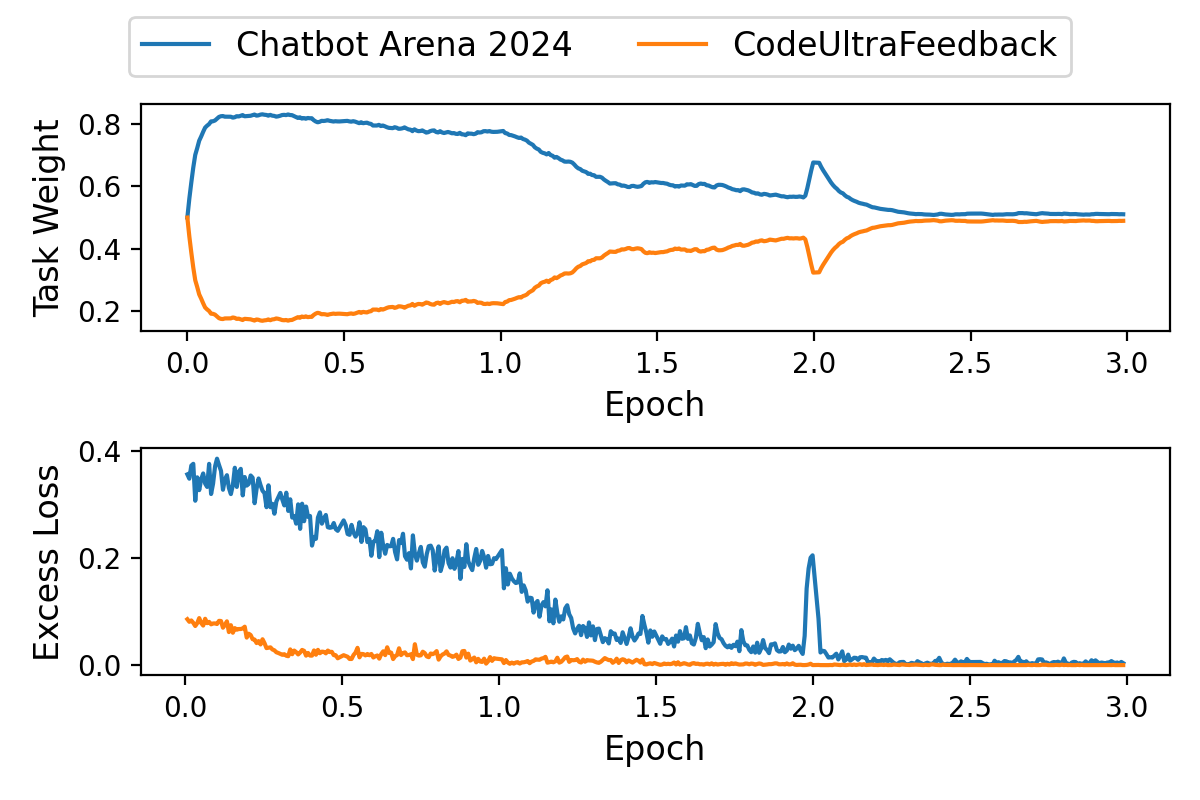}
    \caption{Excess losses and task weights for AMA-R in Setup 1, Helpfulness (Chatbot Arena 2024) + Coding (CodeUltraFeedback).}
    \label{fig:chatbot_codeuf_metrics_ama_r}
\end{figure}
\section{Training Details}

\label{app:training}
We provide DPO and AMA hyperparameters for our main experiments in Table~\ref{tab:hyperparameters} and links to datasets in Table~\ref{tab:datasets}. We built our codebase on top of the alignment-handbook repository\footnote{\url{https://github.com/huggingface/alignment-handbook}} and run all training jobs on 8 NVIDIA A100 80GB GPUs.
AMA trains $k$ specialist models and one generalist model and thus requires $2\sum_{i=1}^k|D_i|$ model updates in total, twice as many required to train a multitask model using standard optimization approaches.






\section{Evaluation}
\label{app:eval}
In this appendix, we describe our model checkpoint selection method and how we evaluate the selected checkpoint.

\subsection{Model Selection}

We save model checkpoints every 0.5 epochs. 
For each checkpoint, compute a confidence interval for the evaluation accuracy across all tasks (+/- 1 standard error):
\begin{equation}
    CI = [\hat p - \sigma, \hat p + \sigma], \quad \sigma = \sqrt{\sum_{i=1}^k\sigma_i^2}, \qquad \sigma_i = \sqrt{\frac{\hat p_i (1-\hat p_i)}{n_i}}
\end{equation}
Here, $\hat p$ is the evaluation accuracy, $\sigma_i$ is the standard error for task $i$, $n_i$ is the number of samples in the eval dataset for task $i$, and $\sigma$ is the standard error for all tasks combined.
We then select the earliest checkpoint whose CI overlaps with the CI of the checkpoint with the maximum evaluation accuracy.

Researchers often look at the evaluation loss for model selection (\textit{e.g,}, selecting the checkpoint saved just before the model begins overfitting), but this metric can be misleading in multi-task settings. 
The losses for different tasks are often on different scales (\textit{i.e.}, the excess loss of task 1 may always be larger than that of task 2), and it's unclear how we should adjust for these differences in loss scale. 
Since task accuracy, the fraction of samples in the evaluation dataset for which $\pi(y_w|x) > \pi(y_l|x)$, is always on a scale from 0 to 1, we instead look at task accuracies to select model checkpoints. 

\subsection{Benchmarks}

We evaluate a single model for each algorithm and experimental setup and base our evaluation on a subset of the benchmarks used by \cite{ivison2024UnpackingDA} described below.
To run AlpacaEval, we use the original \texttt{alpaca\_eval} codebase~\cite{alpaca_eval, dubois2024length}.\footnote{\url{https://github.com/tatsu-lab/alpaca_eval}}
For the remaining benchmarks, we use the \texttt{open-instruct} codebase.\footnote{\url{https://github.com/allenai/open-instruct}}. 
Since we use the Zephyr 7b SFT Full model~\citep{tunstall2023zephyr} as the base policy in all experiments, we use the Zephyr chat template in all evaluations.
%

\begin{itemize}
    \item \textbf{IFEval (Helpfulness)}~\citep{zhou2023instruction}: We use the default setup and report the “Strict Accuracy” metric at the prompt level, \textit{i.e.} the percentage of prompts for which a model's output satisfies all verifiable instructions.
    
    \item \textbf{Alpaca Eval 2.0 (Helpfulness)}~\citep{dubois2024length}: We use the AlpacaEval 2.0 configuration provided for zephyr-7b-alpha-ExPO, reporting the length-controlled winrate.
    We use a temperature of $0.7$, nucleus sampling with $p=0.9$ (top-p), and a top-k value of 50. To control repetition, we applied a presence penalty of $0.1$ and a frequency penalty of $0.1$. report the length-controlled win rate.
    We allow the evaluated model to generate up to 2048 tokens.
    %
    

    \item \textbf{MBPP (Coding)}~\citep{austin2021program}: We report pass@10 accuracy and use a sampling temperature of 0.8. 
    \item \textbf{HumanEval (Coding)}~\citep{chen2021evaluating}): In addition to the original HumanEval prompts, we use the prompts provided by HumanEvalPack~\cite{muennighoff2023octopack}. Just as with MBPP, we report pass@10 accuracy and use a sampling temperature of 0.8. 
    \item \textbf{Toxigen (Harmlessness)}~\citep{hartvigsen2022toxigen}: We use the default setup and report the percentage of \textit{non-toxic} model responses. In particular, Toxigen outputs the fraction of toxic responses $f$, and we report $1-f$ expressed as a percentage. 

\end{itemize}

\section{Licenses}

Below, we give all the artifacts that we used and their respective licenses:
\begin{itemize}
    \item Alignment Handbook: Apache-2.0 License
    \item open-instruct: Apaches-2.0 License
    \item Zephyr 7B SFT Full: Apache-2.0 License
    \item IFEval: Eclipse Public License - v 2.0 License
    \item AlpacaEval: Apache 2.0 License
    \item MBPP: MIT License
    \item HumanEval: MIT License
    \item Toxigen: MIT License
    \item GPT-4 outputs: AlpacaEval uses GPT-4 Turbo outputs. Since we use output for research, we are in compliance with the GPT-4 terms of service.
\end{itemize}


\end{document}